\newcommand{\eg}{{\em e.g.}}           
\newcommand{\ie}{{\em i.e.}}           
\newcommand{\etc}{{\em etc.}}         
\definecolor{light-gray}{gray}{0.6}
\definecolor{light-gray-2}{gray}{0.9}
\begin{document}
\pagestyle{headings}
\mainmatter
\def\ECCVSubNumber{3058}  

\title{RDA: Reciprocal Distribution Alignment for Robust Semi-supervised Learning} 

\titlerunning{RDA: Reciprocal Distribution Alignment for Robust SSL}
%
\author{Yue~Duan\inst{1} \and
Lei~Qi\inst{2} \and
Lei~Wang \inst{3} \and
Luping~Zhou\inst{4} \and
Yinghuan~Shi\inst{1}$^\star$
}
\authorrunning{Y. Duan et al.}
%
\institute{$^1$ Nanjing University, China \quad $^2$ Southeast University, China \\
$^3$ University of Wollongong, Australia \quad $^4$  University of Sydney, Australia
}
\maketitle
\newcommand\blfootnote[1]{%
\begingroup
\renewcommand\thefootnote{}\footnote{#1}%
\addtocounter{footnote}{-1}%
\endgroup
}
\blfootnote{$^\star$ Corresponding author: Y. Shi (e-mail: syh@nju.edu.cn).}
\blfootnote{Y. Duan, Y. Shi are with the National Key
Laboratory for Novel Software Technology and the National Institute of
Healthcare Data Science, Nanjing University.}

\begin{abstract}
In this work, we propose Reciprocal Distribution Alignment (RDA) to address semi-supervised learning (SSL), which is a hyperparameter-free framework that is independent of confidence threshold and works with both the matched (conventionally) and the mismatched class distributions. Distribution mismatch is an often overlooked but more general SSL scenario where the labeled  and the unlabeled data do not fall into the identical class distribution. This may lead to the model not exploiting the labeled data reliably and drastically degrade the performance of SSL methods,  which could not be rescued by the traditional distribution alignment. In RDA,  we enforce a reciprocal alignment on the distributions of the predictions from two classifiers predicting pseudo-labels and complementary labels on the unlabeled data. These two distributions, carrying complementary information, could be utilized to regularize each other without any prior of class distribution. Moreover, we theoretically show that RDA maximizes the input-output mutual information. Our approach achieves promising performance in SSL under a variety of scenarios of mismatched distributions, as well as the conventional matched SSL setting. Our code is available at: \url{https://github.com/NJUyued/RDA4RobustSSL}.
\keywords{distribution alignment, mismatched distributions}
\end{abstract}

\section{Introduction}
\label{sec:intro}
Semi-supervised learning (SSL) leverages the abundant unlabeled data to alleviate the lack of labeled data for machine learning~\cite{chapelle2009semi,zhu2017semi,van2020survey}. 
Lately, \textit{confidence-based pseudo-labeling}~\cite{sohn2020fixmatch,li2021comatch} and \textit{distribution alignment}~\cite{bridle1992unsupervised,berthelot2020remixmatch,li2021comatch,gong2021alphamatch} have been introduced to SSL, boosting the performance to a new height. These techniques improve the label imputation for unlabeled data, which alleviates the confirmation bias~\cite{arazo2020pseudo}. 
In brief, pseudo-labeling aims to achieve entropy minimization \cite{grandvalet2005semi} by producing hard labels. Recently, FixMatch~\cite{sohn2020fixmatch} utilizes the confidence-based threshold to select more accurate pseudo-labels 
and proves the superiority of this technique.
Despite this threshold preventing the model from risk of noisy pseudo-labels, since the learning difficulties of different classes are different, a fixed threshold is not a ``silver bullet'' for all scenarios of SSL.
Although \cite{xu2021dash,zhang2021flexmatch} demonstrate the potential to dynamically adjust the threshold, the adjustment is complicated and the waste of unlabeled data with low confidence will become a latent limitation \cite{duan2022mutexmatch}.
We try to ask --- \textit{is the confidence-based threshold really necessary for pseudo-labeling?} 

\newcommand{\mzz}{2.9cm}
\newcommand{\widd}{4cm}
\newcommand{\mysize}{5cm}
\begin{figure}[t]
  \centering
  \subfigure[]{
  \includegraphics[width=\mzz]{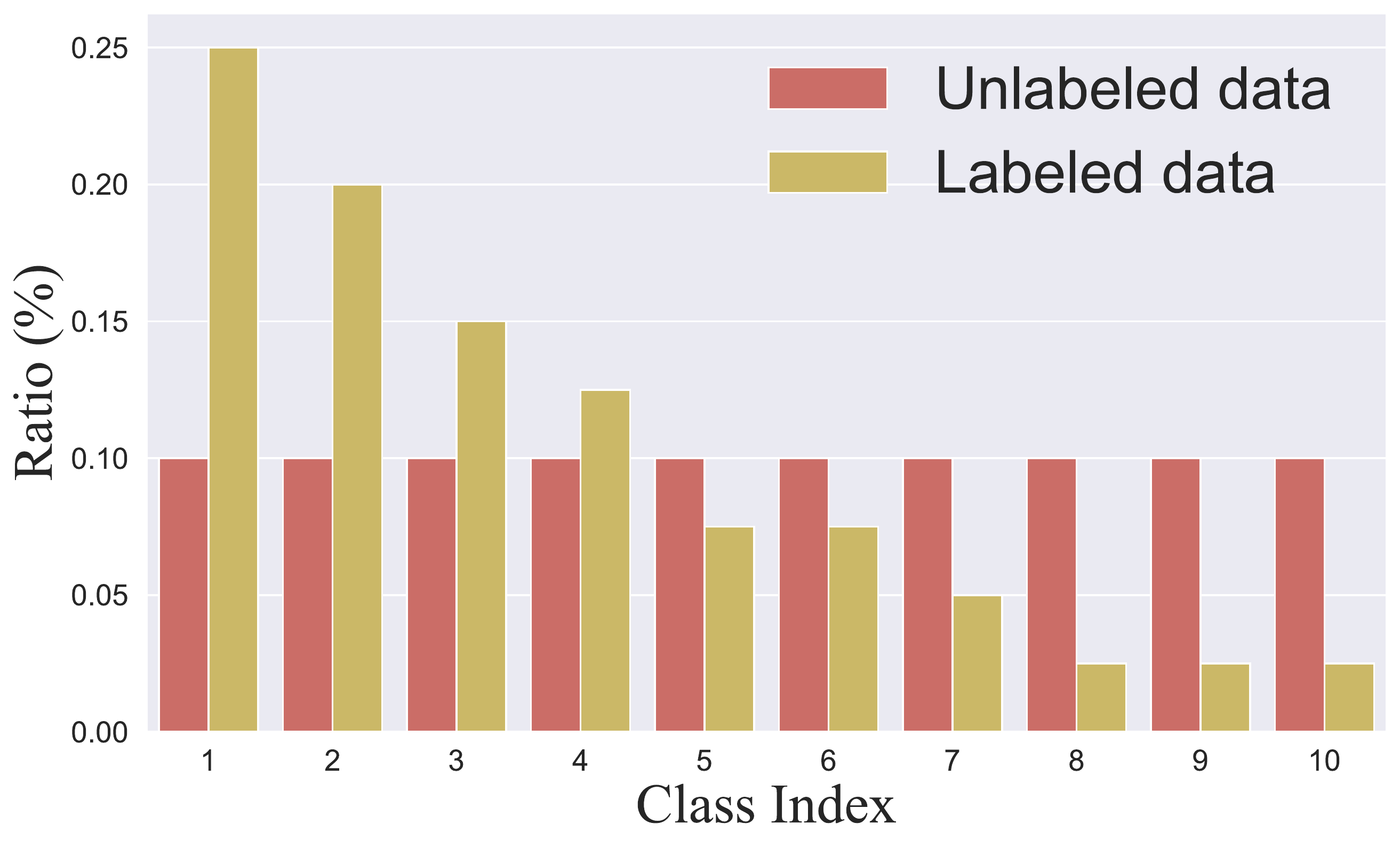}
  \label{fig:mis1}
  }
  \hspace{-4mm}
  \subfigure[]{
  \includegraphics[width=\mzz]{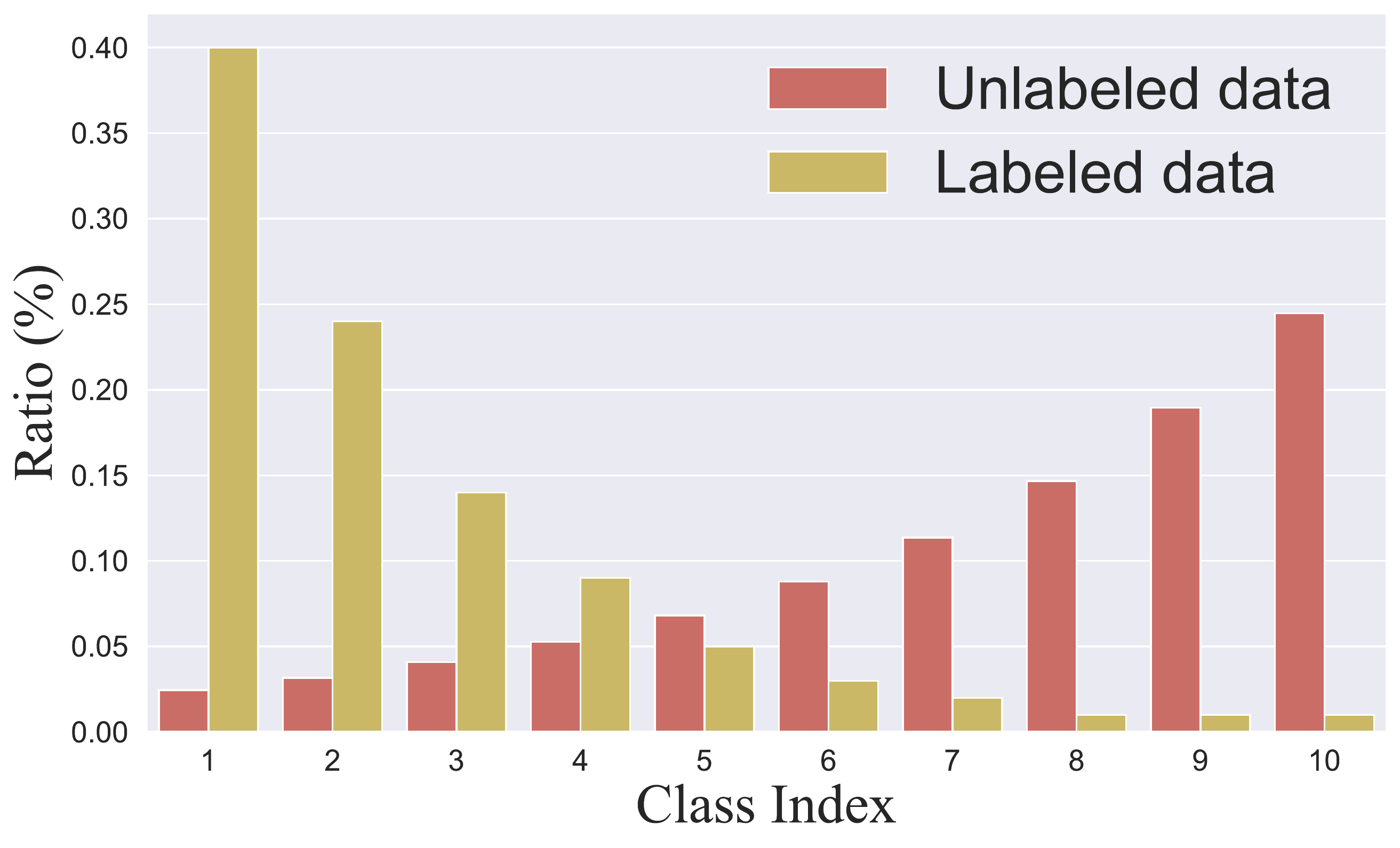}
  \label{fig:mis2}
  }
  \hspace{-4mm}
  \subfigure[]{
  \includegraphics[width=\mzz]{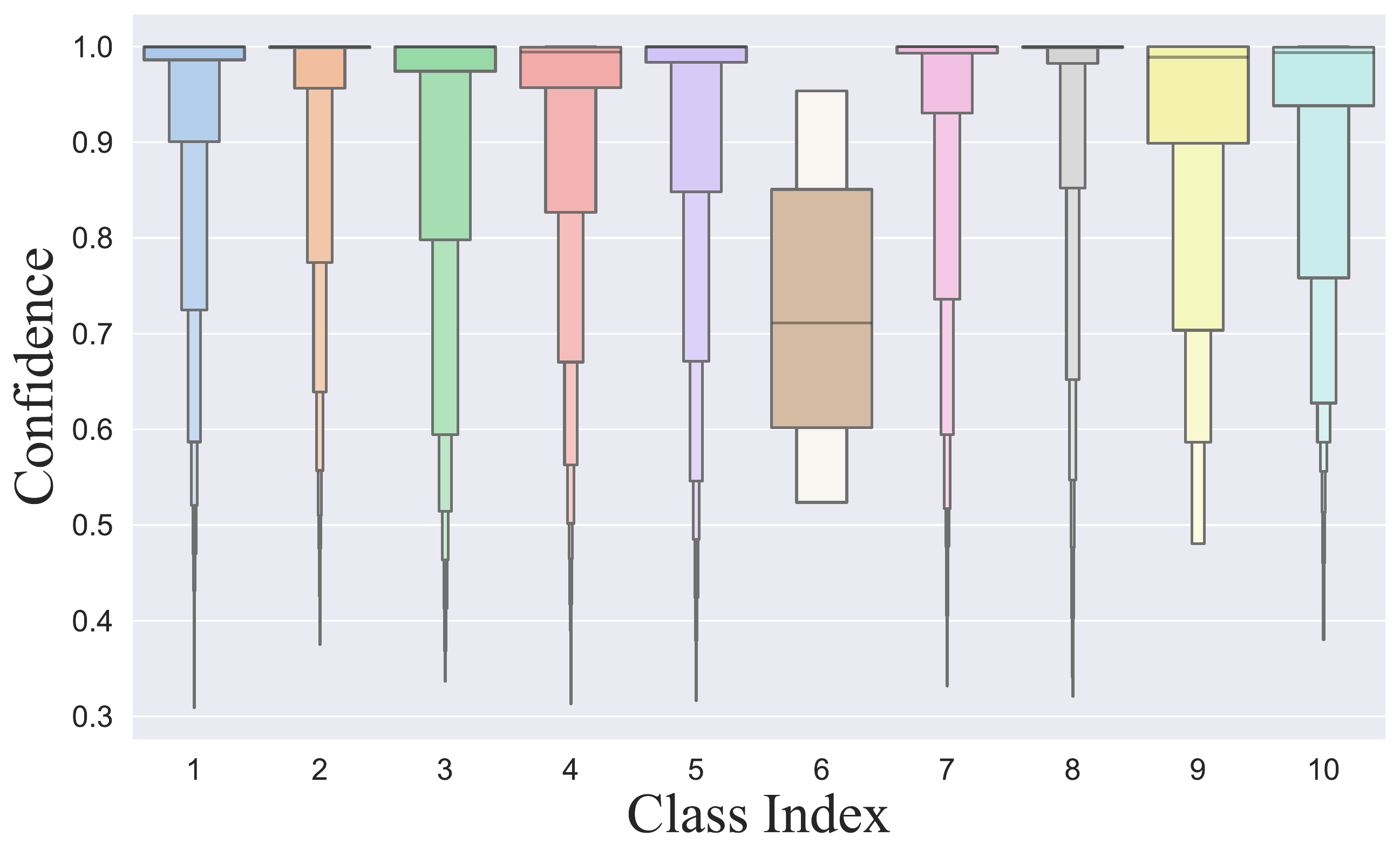}
  \label{fig:mis3}
  }
  \hspace{-4mm}
  \subfigure[]{
  \includegraphics[width=\mzz]{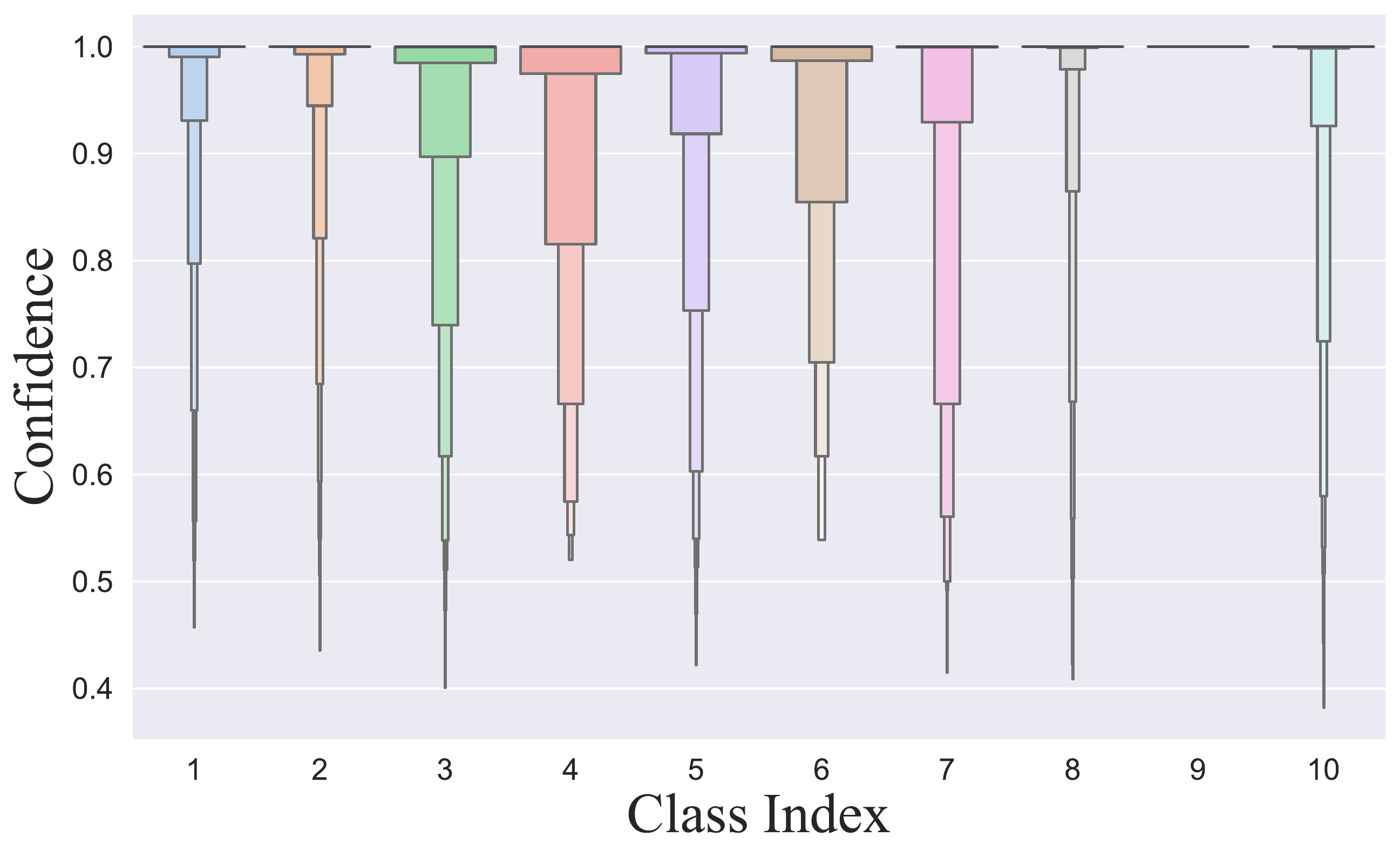}
  \label{fig:mis4}
  }
  \caption{
  Some examples of mismatched distributions in SSL. 
  The x-axis represents the index of classes in CIFAR-10. In (a) and (b), the figures show the distributions of the labeled and unlabeled data.
  In (c) and (d), the figures show the confidences of FixMatch's predictions on the unlabeled data. Letter-value plots  \cite{letter-value-plot} are displayed for multi-level quantile information. In (a) and (c), we show imbalanced labeled data  and  balanced unlabeled data  with 40 labels $N_{0}=10$. In (b) and (d), the labeled and unlabeled data are mismatched and imbalanced with 100 labels, $N_{0}=40$ and $\gamma=10$.  More details about imbalance ratio $N_{0}$ and $\gamma$ can be found in Sec. \ref{sec:cxcu}. In (c) and (d), we can see that the confidences of FixMatch's predictions on the unlabeled data of different classes are totally irregular, which means it is difficult for us to adjust the confidence threshold to judge whether the prediction is correct. 
  \ie, confidence-based pseudo-labeling is also not suitable for the mismatched distributions.
  }
  \label{fig:mis}
\end{figure}
Motivated by this, we rethink pseudo-labeling in a hyperparameter-free way while 
noticing that distribution alignment (DA) has been introduced to SSL \cite{berthelot2020remixmatch,li2021comatch,gong2021alphamatch}. DA scales the predictions on unlabeled data by prior information about labeled data distribution for strong regularization on the pseudo-labels, which can mitigate the confirmation bias. 
Inspired by this, we consider only using DA to improve the pseudo-labels without additional hyperparameters, \ie, DA is enough for pseudo-labeling.
Meanwhile, DA shows great potential in addressing the SSL under long-tailed distribution \cite{wei2021crest}. 
We expect that this technique can play a positive role in SSL in a more general scope.
However, even though DA could help us improve pseudo-labeling by protecting SSL from noise, it is based on a strong assumption: \textit{``labeled data and unlabeled data share the same distribution,''} 
\eg, they are all balanced in CIFAR-10. The scenarios of \textit{mismatched distributions} have not been widely discussed,
\ie, the distribution of labeled data doesn't match that of unlabeled data, 
which is illustrated in Fig. \ref{fig:mis}. Some typical scenarios lead to mismatched distributions, such as biased sampling, label missing not at random \cite{hernan2010causal} and so on. Mismatched distributions might cause biased pseudo-labels, significantly degrading the SSL model performance which is demonstrated by experimental results in Sec. \ref{sec:res}.
Under mismatched distributions, we cannot simply use the distribution of the labeled data to align predictions on unlabeled data with a very different distribution. 
This drives us to explore a more general distribution alignment to meet the above challenge of mismatched distributions.

\begin{figure}[t]  
    \setlength{\abovecaptionskip}{-1em}
  \begin{center}
  \resizebox{\linewidth}{!}{          
      \includegraphics{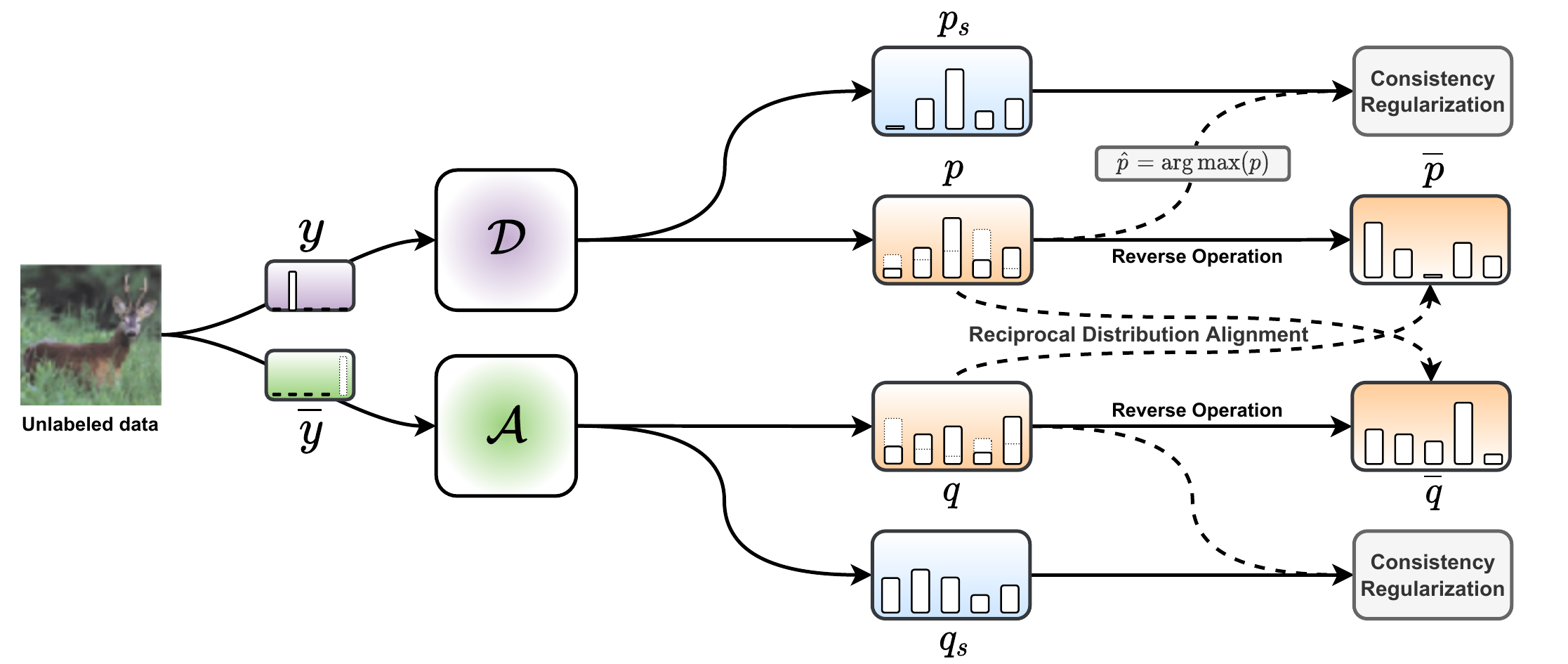}
  }  
  \end{center}
  \caption{Diagram of proposed Reciprocal Distribution Alignment (RDA). We use ground-truth label $y$ and complementary label $\overline{y}$ (dash line means $\overline{y}$ is selected randomly from classes excluding ground-truth label) of labeled data to train Default Classifier $\mathcal{D}$ and Auxiliary Classifier $\mathcal{A}$, respectively. Given an unlabeled sample $u$, $\mathcal{D}$ predicts pseudo-label $p$ and $\mathcal{A}$ predicts complementary label $q$ for its weakly-augmented version. RDA is applied on $p$ and $q$ by reciprocally scaling each other to the distributions of their reversed versions obtained by \textit{Reverse Operation} (Proposition \ref{pro:1}). We then enforce consistency regularization on the aligned pseudo-label and complementary label against corresponding predictions for strongly-augmented $u$, \ie, $p_{s}$ (from $\mathcal{D}$) and $q_{s}$ (from $\mathcal{A}$).}
  \label{fig1}
\end{figure}

Given motivations mentioned above, we propose Reciprocal Distribution Alignment (\textbf{RDA}) to establish a promising semi-supervised learning paradigm, which provides an integrated scheme to handle both the matched and mismatched scenarios
in SSL. To relax the assumption about the class distribution of unlabeled data, we consider starting from the model itself to tap
the potential guidance of distribution by regularizing the predictions from
complementary perspectives. Inspired by \cite{ishida2018complementary,kim2019nlnl,rizve2021in}, we consider simultaneously predict the class labels and their complementary labels (\ie, indicating what class a sample is not), and utilize their distributions to regularize each other.
Thus, we introduce two classifiers to RDA, one is Default Classifier (\textbf{DC}) and the other is  Auxiliary Classifier (\textbf{AC}). Specifically, DC and AC are used to predict pseudo-labels and complementary labels for unlabeled data, respectively. The
pseudo-labels and the complementary labels could be transformed into each other through their reversed version using the \textit{Reverse Operation} defined in Proposition \ref{pro:1} in Sec. \ref{sec:bda}. Then a reciprocal alignment is employed to adjust the distributions of DC’s predictions and AC’s predictions by scaling them according to their corresponding reversed versions. We prove that RDA produces a ``high-entropy'' form of prediction distribution, which lead to maximizing the objective of input-output mutual information \cite{bridle1992unsupervised,berthelot2020remixmatch}.
With the aligned pseudo-labels and complementary labels, the commonly used consistency regularization is further applied on on DC and AC, respectively, which helps the model remain unchanged prediction on perturbed data.
RDA could be applied to help the model improve pseudo-labels without suffering from the threat of mismatched distributions since no prior information about class distribution of data is used.
A diagram of RDA is shown in Fig. \ref{fig1}.

Despite its simplicity, our method shows superior performance in various settings, \eg, on widely-used SSL benchmark CIFAR-10, RDA achieves an accuracy of 92.03$\pm$2.01\% with only 20 labels in the conventional setting, and in mismatched distributions, outperforms CoMatch \cite{li2021comatch}, a recently-proposed algorithm for SSL, by up to a 52.09\% gain on accuracy. Besides the significant performance improvement, our contributions can be presented as follows:
 \begin{itemize}
 \item[$\bullet$]
  We propose Reciprocal Distribution Alignment (RDA), a novel SSL algorithm, which can improve pseudo-labels in a hyperparameter-free way.
 \item[$\bullet$]
  RDA can be safely applied to SSL in both the conventional setting and the scenarios of mismatched distributions.
 \item[$\bullet$]
  We theoretical show that RDA could optimize the objective of mutual information between input data and predictions \cite{bridle1992unsupervised,berthelot2020remixmatch} under the premise of rational use of class distribution guidance information. 
 \end{itemize}

\section{Related Work}
\label{sec:rework}

\noindent \textbf{Pseudo-labeling Based Entropy Minimization.}
Entropy minimization is a significant idea in recent SSL methods, which is closely related to pseudo-labeling (\ie, convert model's predictions to hard labels to  reduce entropy)~\cite{lee2013pseudo,sohn2020fixmatch,li2021comatch,yang2021mining}. In another word, pseudo-labeling results in a form of entropy minimization~\cite{grandvalet2005semi}. This idea argues that model should ensure classes are well-separated while utilizing unlabeled data, which can be achieved by
encouraging the model output prediction with low entropy~\cite{grandvalet2005semi}. 
Recent SSL algorithms like~\cite{sohn2020fixmatch,li2021comatch,xu2021dash,zhao2022lassl} set a confidence-based threshold to refine the pseudo-labels and obtain outstanding performance. 
However, the existence of confidence threshold leads to a  waste of unlabeled samples with low confidence because they were filtered out.
Moreover, it will lead to a significant increase in the cost of dynamic adjustment on confidence threshold like~\cite{xu2021dash,zhang2021flexmatch}.
Meanwhile, under mismatched distributions, it is not reasonable to use a fixed threshold for all classes to filter pseudo-labels, because the model will also be affected by the unlabeled data with a potential risk of unknown distribution. 
In this work, we use distribution alignment to improve pseudo-labeling in a hyperparameter-free way which can achieve a better performance than algorithms introducing  confidence threshold.

\noindent\textbf{Distribution Alignment in SSL.}
Distribution alignment is proposed in  \cite{bridle1992unsupervised} and originally applied to SSL in \cite{berthelot2020remixmatch}.
Briefly,  \cite{berthelot2020remixmatch} integrates it into pseudo-label inference step without additional loss terms or hyper-parameters. The main idea is the marginal distribution of predictions on unlabeled data and the marginal distribution of ground-truth labels should be consistent. This alleviates the confirmation bias~\cite{arazo2020pseudo}  by improving pseudo-labels with the help of distributional guidance information. For class-imbalanced semi-supervised learning, \cite{wei2021crest} improves this technique by replacing the distribution of ground-truth labels with a smoothed form, resulting in superior performance in this setting. This improved distribution alignment in \cite{wei2021crest} helps the model benefit from rebalancing distribution.

In short, the objective of distribution alignment is to maximize the mutual information between the predictions and input data, \ie, input-output mutual information \cite{bridle1992unsupervised,berthelot2020remixmatch}. Denoting the input data as $x$, the class prediction for $x$ as $y$, and the predicted class distribution as $P(y|x)$, we can formalize this objective as:
\begin{equation}
    \mathcal{I}(y;x)=\mathcal{H}(\mathbbm{E}_{x}[P(y|x)])-\mathbbm{E}_{x}[{\mathcal{H}(P(y|x))}],
    \label{eq:remix}
\end{equation}
where $\mathcal{H}(\cdot)$ refers to the entropy. For specific, distribution alignment aims to maximize the term $\mathcal{H}(\mathbbm{E}_{x}[P(y|x)])$.
However, the implementation of this technique in both \cite{berthelot2020remixmatch} and \cite{wei2021crest} is based on an idealized assumption: \textit{``labeled and unlabeled data fall in the same distribution.''} More realistically, we cannot guarantee that the distribution of labeled data matches that of unlabeled data. Such mismatched distributions can cause the distribution alignment in \cite{berthelot2020remixmatch,wei2021crest} to fail and is even detrimental to the model's predictions on unlabeled data. In this work, we propose Reciprocal Distribution Alignment without the assumption of matched distributions and any prior information about the labeled data distribution.

\section{Method}
In this section, we discuss the setting of mismatched distributions in SSL and propose a novel SSL algorithm called Reciprocal Distribution Alignment (\textbf{RDA}) without additional hyper-parameters to improve pseudo-labeling in various scenarios of SSL. Moreover, we theoretically analyze the effectiveness of our method. 
\subsection{Matched and Mismatched Distributions in SSL}
\label{sec:md}
In semi-supervised learning, we have a training set divided into labeled portion $\mathcal{X}$ and unlabeled portion $\mathcal{U}$. We denote class distribution of $\mathcal{X}$ as $\mathcal{C}_{x}$ and class distribution of $\mathcal{U}$ as $\mathcal{C}_{u}$. Note that $\mathcal{C}_{u}$ is inaccessible in training. 
Given $x\in\mathcal{X}$ with corresponding label $y$ and unlabeled data $u\in\mathcal{U}$, we can review the SSL algorithms as the following optimization task:
\begin{equation}
\min \mathcal{L} =\mathcal{L} _{sup}(x,y;\theta)+\mathcal{L}_{unsup}(u;\theta),
\end{equation}
where $\theta$ is the parameters of the model, $\mathcal{L} _{sup}$ is supervised loss for the labeled data and $\mathcal{L} _{unsup}$ is unsupervised loss for the unlabeled data. 
Recent pseudo-labeling based SSL methods try to impute the unknown label of $u$ for $\mathcal{L} _{unsup}$. Therefore, the accuracy of pseudo-labels has become the top priority.
In the traditional SSL setting, we assume $\mathcal{C}_{x}\approx\mathcal{C}_{u}$. Under this assumption, we can use $\mathcal{C}_{x}$ to guide the prediction for $u$ by distribution alignment~\cite{berthelot2020remixmatch,li2021comatch}, which can improve the performance of consistentency-based or pseudo-labeling based methods \cite{berthelot2020remixmatch,sohn2020fixmatch,li2021comatch,gong2021alphamatch}. Unfortunately, this assumption is too impractical and idealistic. More in line with the actual situation is $\mathcal{C}_{x}\not\approx\mathcal{C}_{u}$, which is called \textit{mismatched distributions} in SSL.
Unlike the conventional SSL, in mismatched distributions, 
the model learns a distribution from $\mathcal{C}_{x}$ that differs from $\mathcal{C}_{u}$, so it cannot correctly predict the pseudo-labels. In other words, the distribution gap caused by mismatch leads to strong confirmation bias \cite{arazo2020pseudo}, which could affect the performance of the model. 
It is worth noting that the distribution alignment used in \cite{wei2021crest} to solve the SSL under long-tail distribution also cannot be applied to the mismatched scenarios because \cite{wei2021crest} still depends on the assumption of matched distributions.
To design a method that can tackle mismatched scenarios in SSL, we must face to $\mathcal{C}_{x}\not\approx\mathcal{C}_{u}$, and abandon prior of $\mathcal{C}_{x}$ used in previous method~\cite{berthelot2020remixmatch,wei2021crest}.

\subsection{Overview}
We introduce two classifiers for our method. One is called Default Classifier (DC) $\mathcal{D}$ and the other is called Auxiliary Classifier (AC) $\mathcal{A}$. In a nut shell, for an  unlabeled image, $\mathcal{D}$ is used to predict pseudo-label  and $\mathcal{A}$ is used to predict complementary label. We obtain labeled data $\mathcal{X} =\{(x_{b},y_{b})\}^{B}_{b=1}$ consisting of $B$ images and unlabeled data $\mathcal{U}=\{(u_{b})\}^{\mu B}_{b=1}$ consisting of $\mu B$ images in a batch of data. At first, we construct complementary label $\overline{y}$ for every labeled data by their ground-truth. 
Complementary label~\cite{ishida2017learning,ishida2018complementary} represents which class the sample does not belong to. Denoting $y\in \mathcal{Y} =\{1,\dots,n\} $ as the ground-truth label of $x$ where $n$ is the number of classes, following \cite{kim2019nlnl}, the complementary label of $x$ is randomly selected from $\mathcal{Y} \setminus \{y\}$, which is denoted as $\overline{y}$.

Following~\cite{sohn2020fixmatch}, we integrate \textit{consistency regularization} into RDA. Weak and strong augmentations are performed on images then we enforce consistency regularization on $\mathcal{D}$ and $\mathcal{A}$.
Denoting $u_{w}$ as the weakly-augmented image and $u_{s}$ as the strongly-augmented image for the same unlabeled data $u$, let $y_{c}$ be the class prediction for input image. 
$P_{G}(y_{c}|\cdot)$ refers to the predicted class distribution outputted by classifier $G$ for input. We can obtain pseud-labels $p=P_{\mathcal{D}}(y_{c}|u_{w})$, $p_{s}=P_{\mathcal{D}}(y_{c}|u_{s})$, and complementary labels $q=P_{\mathcal{A}}(y_{c}|u_{w})$, $q_{s}=P_{\mathcal{A}}(y_{c}|u_{s})$ respectively. 
Note that  $p$, $q$ are $n$-dimensional vectors of class probability where $n$ is the number of classes. $p_{i}$, $q_{i}$ represent the probability of belonging to the $i$-th class in the predictions.
Then, dual consistency regularization can be achieved by minimizing the default consistency loss $\mathcal{L}_{cd}$ and auxiliary consistency loss $\mathcal{L}_{ca}$:
\begin{equation}
\mathcal{L}_{cd}=\frac{1}{\mu B}\sum_{n = 1}^{\mu B}H(\hat{p}_{n},p_{s,n}),  
\end{equation}
\begin{equation}
\mathcal{L}_{ca}=\frac{1}{\mu B}\sum_{n = 1}^{\mu B}H(q_{n},q_{s,n}),  
\label{eq:cr} 
\end{equation}
where $H(\cdot,\cdot)$ refers to the cross-entropy loss and $\hat{p}=\arg\max (p)$, which means we use hard labels for consistency regularization on $\mathcal{D}$. Differently, soft labels are used for $\mathcal{A}$ instead.
RDA exploits all unlabeled data for training, whereas previous consistency-based methods waste low-confidence data \cite{sohn2020fixmatch,li2021comatch,xu2021dash}. 

In addition, we enforce cross-entropy loss on  $\mathcal{D}$ between weakly-augmented version of $x$ (denoted as $x_{w}$) and $y$, and on $\mathcal{A}$ between $x_{w}$ and $\overline{y}$ respectively:
\begin{equation}
\mathcal{L}_{sd}=\frac{1}{B}\sum_{n = 1}^{B}H(y_{n},P_{\mathcal{D}}(y_{c}|x_{w,n})),
\end{equation}
\begin{equation}
\label{eq:sup} 
\mathcal{L}_{sa}=\frac{1}{B}\sum_{n = 1}^{B}H(\overline{y}_{n},P_{\mathcal{A}}(y_{c}|x_{w,n})),
\end{equation}
where $\mathcal{L}_{sd}$ is default supervised loss for $\mathcal{D}$ and $\mathcal{L}_{sa}$ is auxiliary supervised loss for $\mathcal{A}$. To sum up, RDA jointly optimizes four losses mentioned above:
\begin{equation}
\mathcal{L} =\mathcal{L}_{sd}+\lambda_{a} \mathcal{L}_{sa}+\lambda_{cd}\mathcal{L}_{cd}+\lambda_{ca} \mathcal{L}_{ca},
\end{equation}
where $\lambda_{a}$, $\lambda_{cd}$ and $\lambda _{ca}$ are trade-off coefficients and are all set to $1$ for simplicity.

Previous entropy minimization based methods like~\cite{sohn2020fixmatch,li2021comatch,xu2021dash} achieve superior performance in SSL by pseudo-labeling. Their key to success is the confidence threshold set to control the selection of pseudo-labels. To eliminate this hyper-parameter that becomes cumbersome in mismatched distributions, we consider a way to improve pseudo-labels using only distribution alignment. 
According to Eq. \eqref{eq:remix}, we can formalize the objective of distribution alignment for $\mathcal{D}$ as:
\begin{equation}
    \mathop{\max}_{\mathcal{D}}\mathcal{H}[\mathbbm{E}_{u}(P_{\mathcal{D}}(y_{c}|u_{w}))],
    \label{eq:h1}
\end{equation}
where $\mathcal{H}(\cdot)$ refers to the entropy. Likewise, we formalize the objective of distribution alignment for $\mathcal{A}$ as:
\begin{equation}
    \mathop{\max}_{\mathcal{A}}\mathcal{H}[\mathbbm{E}_{u}(P_{\mathcal{A}}(y_{c}|u_{w}))].
    \label{eq:h2}
\end{equation}
This two objectives encourage model to make predictions with equal frequency but these are not necessarily useful when dataset's class distribution of ground-truth is not uniform. We use Reciprocal Distribution Alignment descried in next paragraph to incorporate  these two objectives.

\subsection{Reciprocal Distribution Alignment}
\label{sec:bda}
Following \cite{berthelot2020remixmatch}, we notice that making one distribution approach to another (distribution of labeled data is used in \cite{berthelot2020remixmatch}) can achieve the purpose of maximizing Eq. \eqref{eq:remix}. In this way, a form of ``high entropy'' could be achieved for the objective described by Eqs. \eqref{eq:h1} and \eqref{eq:h2}. In brief, we define the objective over $\mathcal{D}$ and $\mathcal{A}$ as:
\begin{equation}
    \mathop{\max}_{\mathcal{D},\mathcal{A}}h(\mathcal{D},\mathcal{A})=\mathcal{H}[\mathbbm{E}_{u}(p)]+\mathcal{H}[\mathbbm{E}_{u}(q)].
    \label{eq:obj}
\end{equation}
However, due to the existence of mismatched scenarios, the class distribution of labeled data cannot be directly used for alignment like \cite{berthelot2020remixmatch}. So, next we will use the distribution of  class predictions (\ie, $\mathbbm{E}_{u}(p)$) and the distribution of  complementary class predictions (\ie, $\mathbbm{E}_{u}(q)$) to build a reciprocal alignment.
Considering there is no strong correlation between the distribution of class predictions and that of complementary class predictions, we assume that $\mathcal{A}$ is used to predict pseudo-label $\overline{q}$ (a ``reversed'' version of $q$), so that the ``reversed'' version of $\mathbbm{E}_{u}(q)$ (\ie, $\mathbbm{E}_{u}(\overline{q})$) can be used to align $\mathbbm{E}_{u}(p)$.
\begin{proposition}[Reverse Operation]
In the case of using $\mathcal{A}$ to predict pseudo-labels, we have $\overline{q}=\textrm{Norm}(\mathbbm{1}-q)$, where $\mathbbm{1}$ is all-one vector and $\textrm{Norm}(x)$ is the normalized operation defined as
$x'_{i}=x_{i}/\sum_{j=1}^{n}x_{j}$, $i\in (1,\dots,n).$
\label{pro:1}
\end{proposition}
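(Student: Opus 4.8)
The plan is to read the Reverse Operation as the Bayesian inversion of the complementary-label sampling process and then to simplify. Recall the convention recalled above: the complementary label $\overline{y}$ of a sample with ground-truth $y$ is drawn uniformly from $\mathcal{Y}\setminus\{y\}$, i.e. $P(\overline{y}=j\mid y)=\tfrac{1}{n-1}$ for $j\neq y$ and $0$ for $j=y$. When $\mathcal{A}$ is repurposed to ``predict pseudo-labels,'' I would interpret the output vector $q$ not as a distribution over the class of the sample but as a soft assignment over \emph{which class plays the role of the complementary label}, and $\overline{q}$ as the induced belief over the true class; so I would set $\overline{q}_i:=P(y=i\mid \overline{y}\sim q)$ and compute this quantity.

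First I would fix the class prior to be uniform. This is the natural choice in the present context, since the entire purpose of the alignment objectives in Eqs.~\eqref{eq:h1}--\eqref{eq:h2} is to drive $\mathbbm{E}_u$ of the predictions toward the maximum-entropy distribution; with any other fixed prior the Reverse Operation would cease to be label-agnostic. Under the uniform prior, Bayes' rule gives, for each fixed $j$,
\[
P(y=i\mid \overline{y}=j)=\frac{P(\overline{y}=j\mid y=i)}{\sum_{k=1}^{n} P(\overline{y}=j\mid y=k)}=\frac{1}{n-1}\quad (i\neq j),\qquad P(y=j\mid \overline{y}=j)=0,
\]
i.e. a \emph{hard} complementary label $j$ reverses to the uniform distribution on $\mathcal{Y}\setminus\{j\}$ --- which is exactly the ``high-entropy'' behaviour RDA wants. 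Then I would extend this to a soft $q$ by linearity, treating $q$ as a mixture over the hard cases:
\[
\overline{q}_i=\sum_{j=1}^{n} q_j\, P(y=i\mid \overline{y}=j)=\frac{1}{n-1}\sum_{j\neq i} q_j=\frac{1-q_i}{n-1},
\]
where the last step uses $\sum_{j} q_j=1$.

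Finally I would recognise this as the claimed normalisation: since $q$ is a probability vector, $\sum_{j=1}^{n}(1-q_j)=n-\sum_{j}q_j=n-1$, so $\overline{q}_i=\dfrac{1-q_i}{n-1}=\dfrac{(\mathbbm{1}-q)_i}{\sum_{j=1}^{n}(\mathbbm{1}-q)_j}=\textrm{Norm}(\mathbbm{1}-q)_i$, which is the asserted identity; along the way one checks $\overline{q}_i\ge 0$ and $\sum_i\overline{q}_i=1$, so $\overline{q}$ is a legitimate pseudo-label distribution.

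I do not expect a real obstacle here: once the reading of $q$ is fixed, the rest is a two-line computation, and the only steps that need a sentence of justification are the choice of the uniform prior and the linearity (mixture) step used to pass from a hard complementary label to the soft vector $q$ --- both modelling choices rather than theorems --- while the collapse of the normalising constant to $n-1$ is routine bookkeeping. If one preferred to skip the Bayesian story, the shortest route is simply to posit that the reverse of a one-hot complementary label $e_j$ is $\tfrac{1}{n-1}(\mathbbm{1}-e_j)$ and that $q\mapsto\overline{q}$ is affine; these two requirements already force $\overline{q}=\tfrac{1}{n-1}(\mathbbm{1}-q)=\textrm{Norm}(\mathbbm{1}-q)$.
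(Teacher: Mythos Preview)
Your proof is correct and follows essentially the same computation as the paper: both obtain $\overline{q}_j=\sum_{i\neq j}\frac{q_i}{n-1}=\frac{1-q_j}{n-1}$ and then identify the denominator $n-1$ with $\sum_k(1-q_k)$ to get the $\textrm{Norm}$ form. The paper phrases the first step as each $q_i$ being ``randomly assigned'' to the other $n-1$ classes, which is exactly your mixture/linearity step without the Bayesian wrapper; the ``shortest route'' you sketch at the end is in fact the paper's argument almost verbatim.
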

\begin{proof}
Assuming we use $\mathcal{A}$ to predict pseudo-label $\overline{q}$, ideally, the probability of one class (\ie, $q_{i}$) should randomly fall on a class which is different from the class predicted currently (\ie, $\overline{q}_{j}$ where $j\neq i$). Thus, for any $\overline{q}_{j}\in\overline{q}$, its value is the sum of the values randomly assigned to it by all $q_{i}$:
\begin{align}
\overline{q}_{j}&=\sum_{i=1,i\neq j}^{n}\frac{q_{i}}{n-1}= \frac{1-q_{j}}{n-1} \label{eq:eqs1}.
\end{align}
Rewriting it we obtain:
\begin{align}
  \overline{q}_{j}
    & =\frac{1-q_{j}}{n-\sum_{k=1}^{n}q_{k}}\
      =\frac{1-q_{j}}{(1-q_{1})+\dots+(1-q_{n})}\nonumber\\
      & =\frac{1-q_{j}}{\sum_{k=1}^{n}(1-q_{k})}
      = \textrm{Norm}(1-q_{j}).
\end{align}
Now, $\overline{q}=\textrm{Norm}(1-q)$ follows by combining the similar proof for any $q_{i}\in q$.
\end{proof}

Likewise, if we use $\mathcal{D}$ to predict complementary label $\overline{p}$, it can be calculated as $\overline{p}=\textrm{Norm}(\mathbbm{1}-p)$. 
By Eq. \eqref{eq:eqs1}, we notice that \textit{Reverse Operation} does not change the relative relationship between classes in the class distribution, but just reverses the order, which allows us to still obtain helpful guidance information from the pseud-label and complementary label perspectives.

Then, distribution alignment is conducted on $\mathbbm{E}_{u}(p)$ by scaling it to $\mathbbm{E}_{u}(\overline{q})$. Reciprocally, we align $\mathbbm{E}_{u}(q)$ by scaling it to $\mathbbm{E}_{u}(\overline{p})$. Following~\cite{berthelot2020remixmatch}, we also integrate  distribution alignment into RDA without hyper-parameters.
We compute the moving average $\Psi(\cdot)$ of $p$, $q$, and their reversed version $\overline{p}$, $\overline{q}$ over last 128 batches, which can respectively serve as the estimation of $\mathbbm{E}_{u}(p)$, $\mathbbm{E}_{u}(q)$, $\mathbbm{E}_{u}(\overline{p})$ and $\mathbbm{E}_{u}(\overline{q})$.  Given an unlabeled image $u$, we scale the prediction of $\mathcal{D}$, \ie, pseudo-label $p$ by:
\begin{equation}
\Tilde{p}=\textrm{Norm}(p\times\frac{\Psi(\overline{q})}{\Psi(p)}),
\label{eq:da1}
\end{equation}
where $\Tilde{p}$ is an aligned probability distribution. Then, $\hat{\Tilde{p}}=\arg\max\Tilde{p}$ is used as hard pseudo-label for default consistency loss $\mathcal{L}_{cd}$.
Meanwhile, we scale the prediction of $\mathcal{A}$, \ie, complementary label $q$ by:
\begin{equation}
\Tilde{q}=\textrm{Norm}(q\times\frac{\Psi(\overline{p})}{\Psi(q)}),
\label{eq:da2}
\end{equation}
where $\Tilde{q}$ is an aligned probability distribution. Then $\Tilde{q}$ is used as soft complementary label for auxiliary consistency loss $\mathcal{L}_{ca}$. The following theorem shows why RDA results in maximizing the objective Eq. \eqref{eq:obj}. In this way, the input-output mutual information could be maximized, boosting the model’s performance \cite{bridle1992unsupervised,berthelot2020remixmatch}.

\begin{theorem}
\label{the}
For pseudo-label $p$ and the reversed pseudo-label $\overline{p}$ obtained by \textbf{Reverse Operation}, we show that the entropy of $\overline{p}$ is larger than that of $p$:
\begin{equation}
    \mathcal{H}(\overline{p})\geq \mathcal{H}(p),
    \label{eq:19}
\end{equation}
where $\mathcal{H}(\cdot)$ refers to the entropy.
\end{theorem}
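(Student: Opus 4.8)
The plan is to exploit the fact that, by Proposition~\ref{pro:1} together with $\sum_{k=1}^{n}p_{k}=1$, each component of the reversed pseudo-label is a uniform \emph{average} of the remaining components of $p$:
\begin{equation}
\overline{p}_{j}=\frac{1-p_{j}}{n-1}=\frac{1}{n-1}\sum_{i\neq j}p_{i},\qquad j=1,\dots,n.
\end{equation}
Equivalently $\overline{p}=Mp$, where $M$ is the doubly stochastic matrix with zero diagonal and off-diagonal entries $\tfrac{1}{n-1}$. Since passing a probability vector through a doubly stochastic matrix cannot decrease Shannon entropy, the inequality $\mathcal{H}(\overline{p})\ge\mathcal{H}(p)$ should follow; the remainder of the argument makes this elementary and self-contained.

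Concretely, I would write the entropy as $\mathcal{H}(r)=-\sum_{k=1}^{n}\phi(r_{k})$ with $\phi(t)=t\log t$ (and $\phi(0)=0$), which is convex on $[0,1]$. Applying Jensen's inequality to the representation of $\overline{p}_{j}$ above gives, for each $j$,
\begin{equation}
\phi(\overline{p}_{j})=\phi\!\left(\frac{1}{n-1}\sum_{i\neq j}p_{i}\right)\le\frac{1}{n-1}\sum_{i\neq j}\phi(p_{i}).
\end{equation}
Summing over $j$ and interchanging the order of summation, each $\phi(p_{i})$ is counted exactly $n-1$ times (once for every $j\neq i$), so the factor $\tfrac{1}{n-1}$ cancels and $\sum_{j}\phi(\overline{p}_{j})\le\sum_{i}\phi(p_{i})$, i.e.\ $-\mathcal{H}(\overline{p})\le-\mathcal{H}(p)$, which is exactly Eq.~\eqref{eq:19}. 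Equality holds precisely when all $p_{i}$ are equal, i.e.\ when $p$ is already uniform --- consistent with the fact that \textbf{Reverse Operation} fixes the uniform distribution. The identical argument applied to $q$ yields $\mathcal{H}(\overline{q})\ge\mathcal{H}(q)$, which is what feeds the reciprocal alignment in Eqs.~\eqref{eq:da1}--\eqref{eq:da2} and hence the objective Eq.~\eqref{eq:obj}.

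I do not expect a serious obstacle, but the one point worth flagging is that the reverse operation is \emph{not} monotone coordinate-wise --- it raises some entries of $p$ and lowers others --- so one cannot argue entry-by-entry. The resolution, and the crux of the proof, is the observation that each $\overline{p}_{j}$ is a uniform average of $n-1$ entries of $p$, after which convexity of $t\log t$ does all the work. (An equivalent but slightly higher-tech route would invoke Birkhoff--von Neumann to write $M$ as a convex combination of permutation matrices and then use concavity and permutation-invariance of $\mathcal{H}$; I would prefer the direct Jensen argument above since it needs no external theorem.) One should also check the edge case $n=2$, where $\overline{p}=(p_{2},p_{1})$ is merely a swap and the inequality holds with equality.
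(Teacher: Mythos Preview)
Your proof is correct and takes a genuinely different, cleaner route than the paper's. The paper's argument splits into two cases according to whether the largest entry $p_{1}$ is below or above $\tfrac{1}{2}$: in the first case it shows a certain auxiliary function $f(x)=x\log x-\tfrac{1-x}{n-1}\log\tfrac{1-x}{n-1}$ is convex on $[0,\tfrac{1}{2})$ and applies Jensen; in the second case it combines a majorization bound $\overline{p}\prec(\tfrac{1}{n-1},\dots,\tfrac{1}{n-1},0)$ with Schur-convexity and a separate lower bound on $\sum_{i}p_{i}\log p_{i}$, and this second chain of inequalities only closes for $n\ge 5$. Your key observation --- that $\overline{p}_{j}=\tfrac{1}{n-1}\sum_{i\ne j}p_{i}$ is exactly a uniform average of the remaining $n-1$ coordinates, so that $\overline{p}=Mp$ for a doubly stochastic $M$ --- lets a single application of Jensen to $t\log t$ handle everything at once, with no case split and no restriction on $n$. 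In short, your argument is both shorter and strictly more general (it covers $n=2,3,4$, which the paper's second case does not), and it is essentially the standard proof that passing through a doubly stochastic matrix cannot decrease Shannon entropy; the paper's version is more laborious and leaves a gap for small $n$.
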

\begin{proof}
We sort the sequence $p_{1},\dots,p_{n}$ in descending order and denote the sorted sequence as $p_{1}\geq\dots\geq p_{n}$ for simplicity. Considering the case where $p_{1}<\frac{1}{2}$ firstly, we prove a equivalent form of Theorem \ref{the}:
\begin{equation}
    \sum^{n}_{i=1}[p_{i}\log p_{i}-\frac{(1-p_{i})}{n-1}\log \frac{(1-p_{i})}{n-1}]\geq 0.
    \label{eq:16}
\end{equation}
We define the function as
\begin{equation}
    f(x)=x\log x-\frac{1-x}{n-1}\log\frac{1-x}{n-1},
\end{equation}
where $x\in [0,\frac{1}{2})$ by $\frac{1}{2}\geq p_{1}\geq\dots\geq p_{n}$.
The second derivative of this function is
\begin{equation}
    f''(x)=\frac{1}{x}-\frac{1}{(n-1)(1-x)}=\frac{(n-1)-nx}{x(n-1)(1-x)}
\end{equation}
Let $f''(x)\geq0$, we obtain $x\leq \frac{n-1}{n}$. Considering $n\geq 2$, the minimum of the term $\frac{n-1}{n}$ is $\frac{1}{2}$. By $x\leq \frac{1}{2}$, $f''(x)\geq0$ holds, which means the $f(x)$ is a convex function. Thus, by Jensen's Inequality, we have
\begin{equation}
    \frac{1}{n}\sum^{n}_{i=1}f(x_{i})\geq f(\frac{1}{n}\sum^{n}_{i=1}x_{i})
    \label{eq:leq12}
\end{equation}
Substituting in $x_{i}=p_{i}$, by Eq.  \eqref{eq:leq12}, we obtain
\begin{equation}
     \frac{1}{n}\sum^{n}_{i=1}(p_{i}\log p_{i}-\frac{1-p_{i}}{n-1}\log\frac{1-p_{i}}{n-1})\geq \frac{1}{n}\log\frac{1}{n}-\frac{1-\frac{1}{n}}{n-1}\log\frac{1-\frac{1}{n}}{n-1}=0
\end{equation}

Thus, Eq. \eqref{eq:16} holds when $p_{1}<\frac{1}{2}$. Next, we consider the case where $p_{1}\geq\frac{1}{2}$. Rewriting Eq.  \eqref{eq:19}, we obtain
\begin{equation}
    \sum^{n}_{i=1}p_{i}\log p_{i}\geq \sum^{n}_{i=1}\overline{p}_{i}\log \overline{p}_{i}.
    \label{eq:25}
\end{equation}
Denoting $\overline{p}_{1}=\frac{1-p_{n}}{n-1},\dots,\overline{p}_{n}=\frac{1-p_{1}}{n-1}$, we have 
\begin{equation}
    \frac{1}{n-1}\geq\overline{p}_{1}\geq\dots\geq \overline{p}_{n}.
    \label{eq:23}
\end{equation}
Let $\boldsymbol {a}=(\overline{p}_{1},\dots,\overline{p}_{n-1},\overline{p}_{n})$ and $\boldsymbol {b}= (\frac{1}{n-1},\dots,\frac{1}{n-1},0)$, by Eq. \eqref{eq:23} and $\sum^{n}_{i=1}\overline{p}_{i}=\sum^{n-1}_{i=1}\frac{1}{n-1}=1$,  we notice $\boldsymbol {a}$ is majorized by $\boldsymbol {b} $ 
($\boldsymbol {a}\prec \boldsymbol {b}$) 
\cite{marshall1979inequalities,arnold2012majorization}. Since the function $g(\boldsymbol{x})=\sum^{d}_{i=1} x_{i}\log(x_{i})$ is Schur-convex \cite{peajcariaac1992convex,roberts1993convex}, we have $g(\boldsymbol {a})\leq g(\boldsymbol {b})$ \cite{peajcariaac1992convex,roberts1993convex}, \ie, 
\begin{equation}
    \sum^{n}_{i=1}\overline{p}_{i}\log \overline{p}_{i}
    \leq 
    (n-1)\frac{1}{n-1}\log\frac{1}{n-1}=-\log(n-1).
    \label{eq:27}
\end{equation}
Next, rewriting the left term in Eq.  \eqref{eq:25}, we have
\begin{equation}
    \sum^{n}_{i=1}p_{i}\log p_{i}=p_{1}\log p_{1}+\sum^{n}_{i=2}p_{i}\log p_{i}.
    \label{eq:28}
\end{equation}
 Since $p_{2}+\dots+p_{n}=1-p_{1}$ and $g(x)=x\log x$ is a convex function, by Jensen’s Inequality, we obtain the minimum of $ \sum^{n}_{i=2}p_{i}\log p_{i} $ when $p_{2}=\dots=p_{n}=\frac{1-p_{1}}{n-1}$. Then, by Eq. \eqref{eq:28}, we have
 \begin{align}
     \sum^{n}_{i=1}p_{i}\log p_{i}  &\geq p_{1}\log  p_{1}+(\frac{1-p_{1}}{n-1}\log\frac{1-p_{1}}{n-1})(n-1) \nonumber \\
     &=p_{1}\log p_{1}+(1-p_{1})\log (1-p_{1})-(1-p_{1})\log (n-1) \nonumber \\
     &\geq-1-\frac{1}{2}\log (n-1) \label{eq:last}. \tag{using $p_{1}\log p_{1}+(1-p_{1})\log (1-p_{1})\geq -\log2$ and $1-p_{1}\leq \frac{1}{2}$ }
 \end{align}
Notice that by Eq. \eqref{eq:27} we have $\sum^{n}_{i=1}\overline{p}_{i}\log \overline{p}_{i}\leq-\log(n-1)$. Solving inequality
\begin{equation}
    -1-\frac{1}{2}\log (n-1)\geq -\log(n-1),
\end{equation}
we obtain that Eq. \eqref{eq:25} holds when $n\geq5$. Theorem \ref{the} now follows by combining the proofs for the cases where $p_{1}<\frac{1}{2}$ and $p_{1}\geq\frac{1}{2}$. In sum, for multi-classification tasks, we prove that when $n\geq5$, $\mathcal{H}(\overline{p})\geq \mathcal{H}(p)$ holds, \ie, \textit{Reverse Operation} could maximize the entropy of $p$. 
The proof for complementary label can be obtained by replacing $p$ and $\overline{p}$ in the above formulas with $q$ and $\overline{q}$, respectively. 
\end{proof}

Given the above proof, $\mathcal{D}$ and $\mathcal{A}$ are optimized to output predictions $\overline{p}$ and $\overline{q}$ with larger entropy, \ie,
\begin{equation}
\mathcal{H}[\mathbbm{E}_{u}(p)]+\mathcal{H}[\mathbbm{E}_{u}(q)]\leq\mathcal{H}[\mathbbm{E}_{u}(\overline{p})]+\mathcal{H}[\mathbbm{E}_{u}(\overline{q})].
\end{equation}
Thus it can be seen that RDA maximizes the objective Eq. \eqref{eq:obj} by aligning $\mathbbm{E}_{u}(p)$ to $\mathbbm{E}_{u}(\overline{q})$ and aligning $\mathbbm{E}_{u}(q)$ to $\mathbbm{E}_{u}(\overline{p})$ reciprocally, so as the input-output mutual information objective Eq. \eqref{eq:remix} could be maximized.

With \textit{Reverse Operation}, we can apply distribution alignment while ensuring that the relative relationship between classes in the class distribution can be utilized, so as RDA could achieve a more reasonable form of ``high entropy'' for the objective of distribution alignment without using prior about $C_{x}$.
So far, we construct hyperparameter-free Reciprocal Distribution Alignment (\textbf{RDA}), which is robust to SSL under both mismatched distributions and the conventional setting.
The whole algorithm is presented in Sec. \ref{sec:alg} of Supplementary Material.

\section{Experimental Setup}
\label{sec:exp}
We evaluate RDA on various standard benchmarks of SSL image classification task under diverse settings, including mismatched distributions (\ie, $C_{x}\not\approx C_{u}$) and the conventional SSL setting (\ie, $C_{x}\approx C_{u}$ and they are all balanced). 
Experiments show that RDA outperforms significantly over current state-of-the-art (SOTA) SSL methods under most settings. 
We also conduct further ablation studies on the effectiveness of each components in our method.

\subsection{Datasets}
RDA is evaluated on four datasets used in SSL widely: CIFAR-10/100~\cite{krizhevsky2009learning},
 STL-10 \cite{coates2011an} and mini-ImageNet~\cite{vinyals2016matching}.
 CIFAR-10/100, are composed of 60,000 images from 10/100 classes.
 Both of them are divided into training set with 50,000 images and test set with 10,000 images. 
 STL-10 is composed of 5,000 labeled images and 100,000 unlabeled images which extracted from a broader distribution.
 mini-ImageNet is a subset of ImageNet~\cite{deng2009imagenet} consisting of 100 classes, and each class has 600 images.
 \subsection{Settings of $C_{x}$ and $C_{u}$}
 \label{sec:cxcu}
 In addition to the conventional matched setting (\ie, both $C_{x}$ and $C_{u}$ are balanced), we verify the efficacy of our method in more realistic mismatched scenarios, as discussed in Sec. \ref{sec:md}. In view of the complexity of this problem, we mainly use the following three scenarios to summary our experimental protocol:
 \begin{itemize}
  \item[$\bullet$]
  Training with imbalanced $C_{x}$ and balanced $C_{u}$. We are interested in the impact of mismatched distributions resulting from this simple setting. A graphical explanation of this setting is shown in Fig. \ref{fig:mis1}.
  \item[$\bullet$]
 Training with mismatched and imbalanced $C_{x}$, $C_{u}$, 
which is shown in  Fig. \ref{fig:mis2}. This  challenging setting can fully test the robustness of RDA.
     \item[$\bullet$]
 Training with balanced $C_{x}$ and imbalanced $C_{u}$.
\end{itemize}

For experiments in above scenarios, we randomly select samples from dataset to construct imbalanced $C_{x}$ and $C_{u}$.
For $C_{x}$, the number of labeled data $N_{i}$ in each class is fixed by $N_{0}$. $N_{i}$ is calculated as $N_{i}=N_{0}\times \gamma_x^{-\frac{i-1}{n-1}}$, where $n$ is the number of classes and $i\in (1,\dots,n)$. For fairness, we hold $N_{0}$ and search a proper $\gamma_x$ for each $N_{i}$ to keep the total number of labeled data consistent with we set. Details on searching for $\gamma_x$ are shown in  Sec. \ref{app:mcd} of Supplementary Material.

Specially, $C_{u}$ is constructed in a form similar to reversely ordered $C_{x}$ 
for more challenging setting. After a random selection of unlabeled data from dataset, the remaining data is seen as unlabeled data. The number of unlabeled data $M_{i}$ of each class is fixed by: $M_{i}=M_{0}\times \gamma^{-\frac{n-i}{n-1}}$, where $M_{0}=5000$ in CIFAR-10, $M_{0}=500$ in mini-ImageNet. In this way, we construct $C_{u}$ as a \textit{``reversed''} version of $C_{x}$ as shown in Fig. \ref{fig:mis2}. Likewise, DARP's protocol  \cite{kim2020distribution} also produces datasets with mismatched distributions from CIFAR-10 and STL-10. So we also make a fair comparison with DARP  under this protocol. More details about DARP's protocol can be found in  Sec. \ref{app:darp} of Supplementary Material.

 \subsection{Baselines}
We compare RDA mainly with three recent state-of-the-art SSL methods: (1) FixMatch~\cite{sohn2020fixmatch}, combining consistency regularization and entropy minimization; (2) FixMatch with distribution
alignment~\cite{berthelot2020remixmatch}; (3) CoMatch~\cite{li2021comatch}, combining graph-based contrastive learning and consistency regularization. 
Moreover,  we provide more comparisons with MixMatch \cite{berthelot2019mixmatch}, AlphaMatch \cite{gong2021alphamatch}, and DARP \cite{kim2020distribution}.
\subsection{Implementation Details}
\label{sec:id}

Unless noted otherwise,
we adopt Wide ResNet~\cite{zagoruyko2016wide} 
and Resnet-18 \cite{he2016deep}
as the backbone for experiments. 
For specific, WRN-28-2 is used for CIFAR-10, WRN-28-8 is used for CIFAR-100 and Resnet-18 is used for STL-10/mini-ImageNet.
Following~\cite{sohn2020fixmatch}, RandAugment~\cite{cubuk2020randaugment} is used for strong augmentation. For simplicity, we train models using SGD with a momentum of 0.9 and a weight decay of 0.0005 in all experiments. In addition, we use a learning rate of 0.03 with cosine decay schedule to train the models for 1024 epochs.  
For hyper-parameters, we set $\mu =7, B=64, \lambda_{a}=\lambda_{cd}=\lambda_{ca}=1$ for all experiments.
Particularly, we report the results averaged on five folds and the standard deviation is calculated.

\section{Results and Analysis}
\subsection{Conventional Setting (Matched Distributions)}
For a fair comparison with baseline SSL methods, we conduct experiments in the conventional setting, \ie, both $C_{x}$ and $C_{w}$ are balanced. We test the accuracy of RDA on CIFAR-10, mini-ImageNet, and  STL-10 by varying the number of labeled data.
Tab. \ref{table:coven} shows that the performance of RDA is compatible to (if not better than) that of the conventional SSL methods under matched class distributions.
This results also confirm our view that with our design, the distribution alignment alone is enough for pseudo-labeling.
RDA outperforms CoMatch by 3.60\%  when labels are scarce (with 20 labels). Moreover, on datasets with more classes, our method consistently achieves improvement on accuracy than the best baseline, \eg, 46.91\% (ours) vs 43.72\% (CoMatch) on mini-ImageNet with 1000 labels. The superior performance benefits from RDA, which improves pseudo-labels with the co-regularization of complementary class distribution and utilizes the entire unlabeled data, whereas low-confidence samples are filtered out in \cite{sohn2020fixmatch,li2021comatch}. 

\subsection{Mismatched Distributions}
\label{sec:res}
\textbf{Imbalanced $C_{x}$ and Balanced $C_{u}$}.
We keep balanced distribution in the unlabeled data and vary $N_{0}$ to change the imbalance degree of $C_{x}$ while the total number of labeled data remains unchanged in the way described in Sec. \ref{sec:cxcu}. Tab. \ref{table:mismatch1} shows the results on CIFAR-10, CIFAR-100, and mini-ImageNet.  
RDA outperforms all baseline methods by a large margin. \eg, on CIFAR-10, with 100 labels and $N_{0}=80$, RDA outperforms  FixMatch by 7.43\% and CoMatch  by 52.09\%. 
We witness that mismatched $C_{x}$ and $C_{u}$ significantly decrease the models' performance. Notably, the traditional distribution alignment, assuming the labeled and unlabeled data share the same distribution, significantly degrades the performance of model when the distributions mismatch, whereas our method improves this situation by utilizing guidance of distribution information without any prior.  
As shown in  Figs. \ref{fig:resa} and \ref{fig:resc},  RDA resists the impact of imbalanced $C_{x}$ and computes a more balanced pseudo-label distribution than FixMatch, demonstrating the effectiveness of RDA in this mismatched distributions scenario. Additionally, Figs. \ref{fig:resb} and \ref{fig:resd} show that the predictions of RDA are not necessarily more confident than that of FixMatch, but RDA reduces the overfitting on false pseudo-labels, \ie, RDA  is not as overconfident as FixMatch on pseudo-labels that may be wrong.
Thanks to no requirement of prior  about the labeled data distribution, RDA can be safely applied to this scenario without being overwhelmingly affected by distribution gap, thus exhibiting robust performance.

\begin{table}[t]
    \centering
  \caption{Results of accuracy (\%) in the conventional matched SSL setting.
  Results with $\ast$ are copied from CoMatch~\cite{li2021comatch} and with $\dagger$ are copied from AlphaMatch~\cite{gong2021alphamatch}. Results of other baselines are based on our reimplementation. }
  \label{table:coven}
    \setlength{\tabcolsep}{1.58mm}{
        \fontsize{5.7}{5}\selectfont
      \begin{tabular}{@{}l|cccc|c|c@{}}     
      \toprule
      \multirow{2}{*}{Method} & \multicolumn{4}{c|}{CIFAR-10}  & \multicolumn{1}{c|}{mini-ImageNet} & \multicolumn{1}{c}{STL-10} \\  \cmidrule(lr){2-5} \cmidrule(lr){6-6}   \cmidrule(lr){7-7}  
      & 20 labels            & 40 labels       & 80 labels  & 100 labels             & 1000 labels & 1000 labels\\ \cmidrule(r){1-1} \cmidrule(lr){2-5}  \cmidrule(lr){6-6}   \cmidrule(lr){7-7}   
      
      $\mbox{MixMatch}^{\ast}$              & 27.84$\pm$10.63                               & 51.90$\pm$11.76                               & 80.79$\pm$1.28                    & -             & - & 38.02$\pm$8.29  \\ 
        $\mbox{AlphaMatch}^{\dagger}$              & -                               & 91.35$\pm$3.38                               &-                    & -             & - & -  \\ \cmidrule(r){1-1} \cmidrule(lr){2-5}  \cmidrule(lr){6-6}   \cmidrule(lr){7-7} 
      $\mbox{FixMatch}$                    & 84.97$\pm$10.37       & 89.18$\pm$1.54            & 91.99$\pm$0.71      & 93.14$\pm$0.76  & 39.03$\pm$0.66    & $\mbox{65.38$\pm$0.42}^\ast$  \\
      $\mbox{CoMatch}$              & 88.43$\pm$7.22                               & 93.21$\pm$1.55                               & 94.08$\pm$0.31                    & \textbf{94.55$\pm$0.27}               & 43.72$\pm$0.58  & $\mbox{79.80$\pm$0.38}^\ast$   \\\cmidrule(r){1-1} \cmidrule(lr){2-5}  \cmidrule(lr){6-6}   \cmidrule(lr){7-7}  

      RDA              & \textbf{92.03$\pm$2.01}            & \textbf{94.13$\pm$1.22}             & \textbf{94.24$\pm$0.42}           & 94.35$\pm$0.25       & \textbf{46.91$\pm$1.16}   & \textbf{82.63$\pm$0.54}                    \\ 
      \bottomrule
      \end{tabular}
      }
\end{table}
\begin{table}[t]
  \caption{Results of accuracy (\%) in the mismatched scenario with imbalanced $C_{x}$ (\ie, alter $N_{0}$) and balanced $C_{u}$. Experiments are conducted
  on CIFAR-10, CIFAR-100 and mini-ImageNet varying the number of labels and $N_{0}$. Baseline methods are using our reimplementation.  Results with \textbf{DA} are achieved by combining the original \textit{distribution alignment} in~\cite{berthelot2020remixmatch}. 
  \textbf{Note that CoMatch \cite{li2021comatch} also integrates DA technique.}
  }
  \vskip 0in
    \centering
  \resizebox{\textwidth}{!}{  
      \label{table:mismatch1}
     \!\!\! \begin{tabular}{@{}l|cccc|cc|cc@{}}     
      \toprule[1.2pt]
      \multirow{3}{*}{Method} & \multicolumn{4}{c|}{CIFAR-10} & \multicolumn{2}{c|}{CIFAR-100} & \multicolumn{2}{c}{mini-ImageNet}  \\  \cmidrule(lr){2-3}  \cmidrule(lr){4-5} \cmidrule(lr){6-6} \cmidrule(lr){7-7} \cmidrule(lr){8-9}  
      & \multicolumn{2}{c}{40 labels}   & \multicolumn{2}{c|}{100 labels} & \multicolumn{1}{c}{400 labels} & \multicolumn{1}{c|}{1000 labels} & \multicolumn{2}{c}{1000 labels} \\  \cmidrule(lr){2-3}  \cmidrule(lr){4-5}  \cmidrule(lr){6-6} \cmidrule(lr){7-7}   \cmidrule(lr){8-9}  
      & $N_{0}=10$            & 20                       & 40 & 80 &  40 & 80      & 40 &80   \\ \cmidrule(r){1-1} \cmidrule(lr){2-3}  \cmidrule(lr){4-5}\cmidrule(lr){6-6} \cmidrule(lr){7-7}   \cmidrule(lr){8-9}   
      FixMatch    & 85.72$\pm$0.93        & 76.53$\pm$3.03     & 93.01$\pm$0.72  & 71.57$\pm$1.88    & 25.66$\pm$0.46    & 40.22$\pm$1.00    & 36.20$\pm$0.36  & 28.33$\pm$0.41 \\
      FixMatch w. DA  & 71.23$\pm$1.25  & 47.85$\pm$1.99  & 56.78$\pm$1.28    & 34.18$\pm$0.86  & 22.66$\pm$1.53    & 31.06$\pm$0.51   &  33.87$\pm$0.40 & 23.53$\pm$0.72\\
      CoMatch    & 60.27$\pm$3.22        & 39.48$\pm$2.20     & 52.82$\pm$2.03  & 26.91$\pm$0.75   & 23.97$\pm$0.62    &  28.35$\pm$1.20   & 30.24$\pm$1.37  & 21.47$\pm$0.86 \\ \cmidrule(r){1-1} \cmidrule(lr){2-3}  \cmidrule(lr){4-5}  \cmidrule(lr){6-6} \cmidrule(lr){7-7}   \cmidrule(lr){8-9}   

      RDA    & \textbf{92.57$\pm$0.53}     & \textbf{81.78$\pm$6.44}     & \textbf{94.23$\pm$0.36}      &  \textbf{79.00$\pm$2.67}      & \textbf{30.86$\pm$0.78}            & \textbf{41.29$\pm$0.43}       & \textbf{42.73$\pm$0.84}    & \textbf{36.73$\pm$1.01}     \\ 
      \bottomrule[1.2pt]
      \end{tabular}
  }
\end{table}
\newcommand{\mz}{2.9cm}
\newcommand{\wid}{0.24}
\begin{figure*}[t]
  \centering
  \subfigure[(40, 20, 1)]{
  \includegraphics[width=\mz]{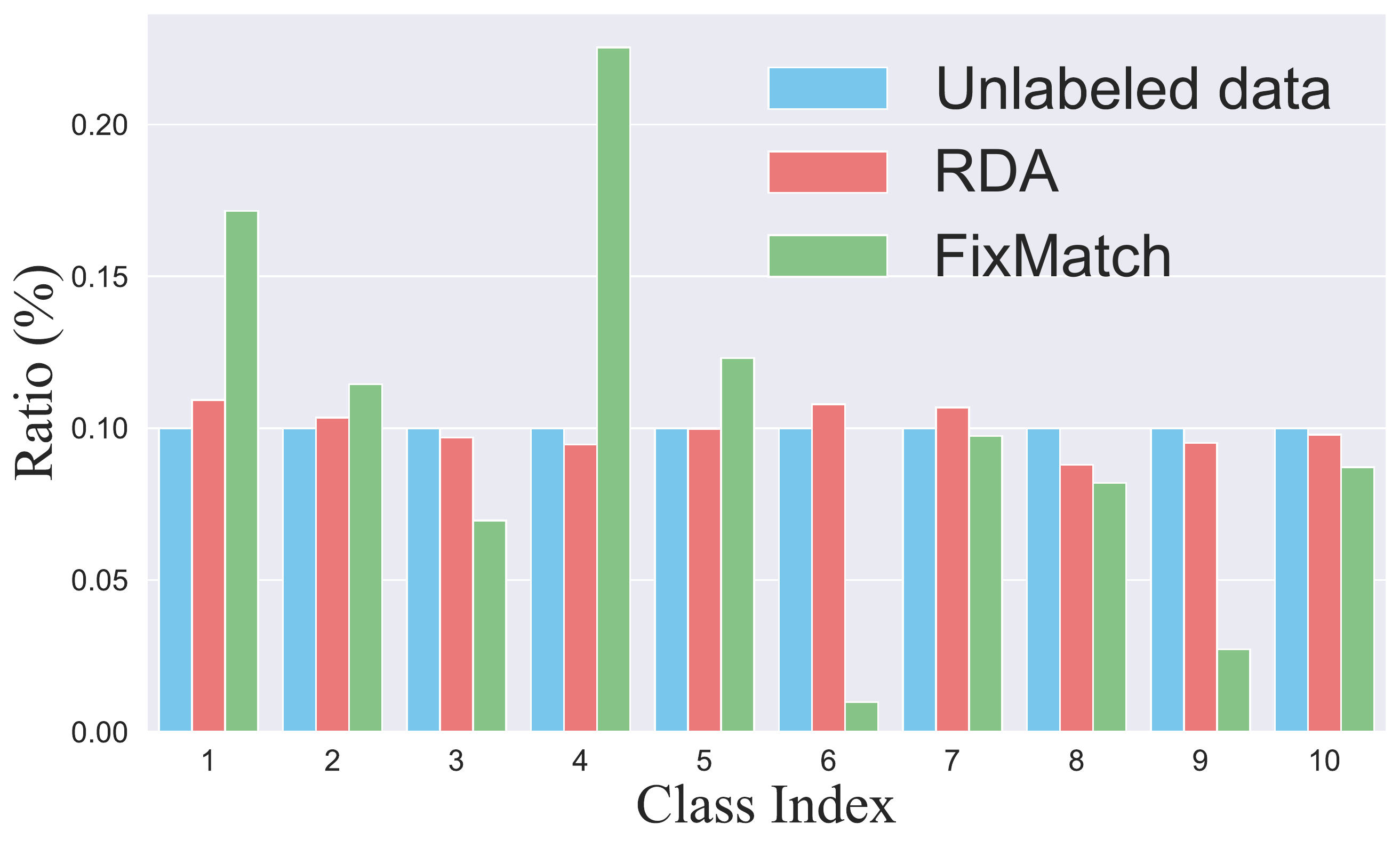}
    \label{fig:resa}
  }
  \hspace{-3mm}
  \subfigure[(40, 20, 1)]{
  \includegraphics[width=\mz]{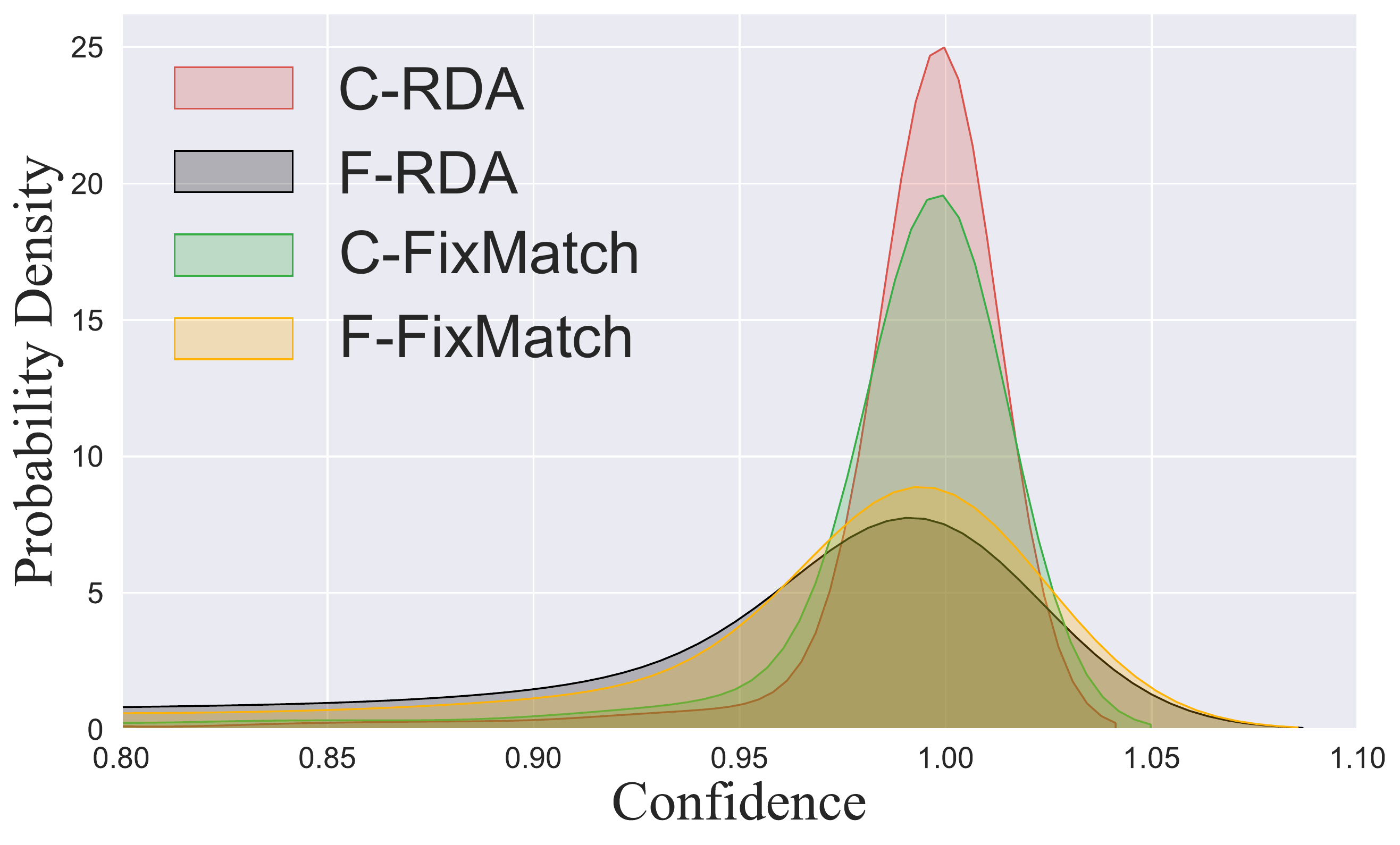}
  \label{fig:resb}
  }
  \hspace{-3mm}
  \subfigure[(100, 80, 1)]{
  \includegraphics[width=\mz]{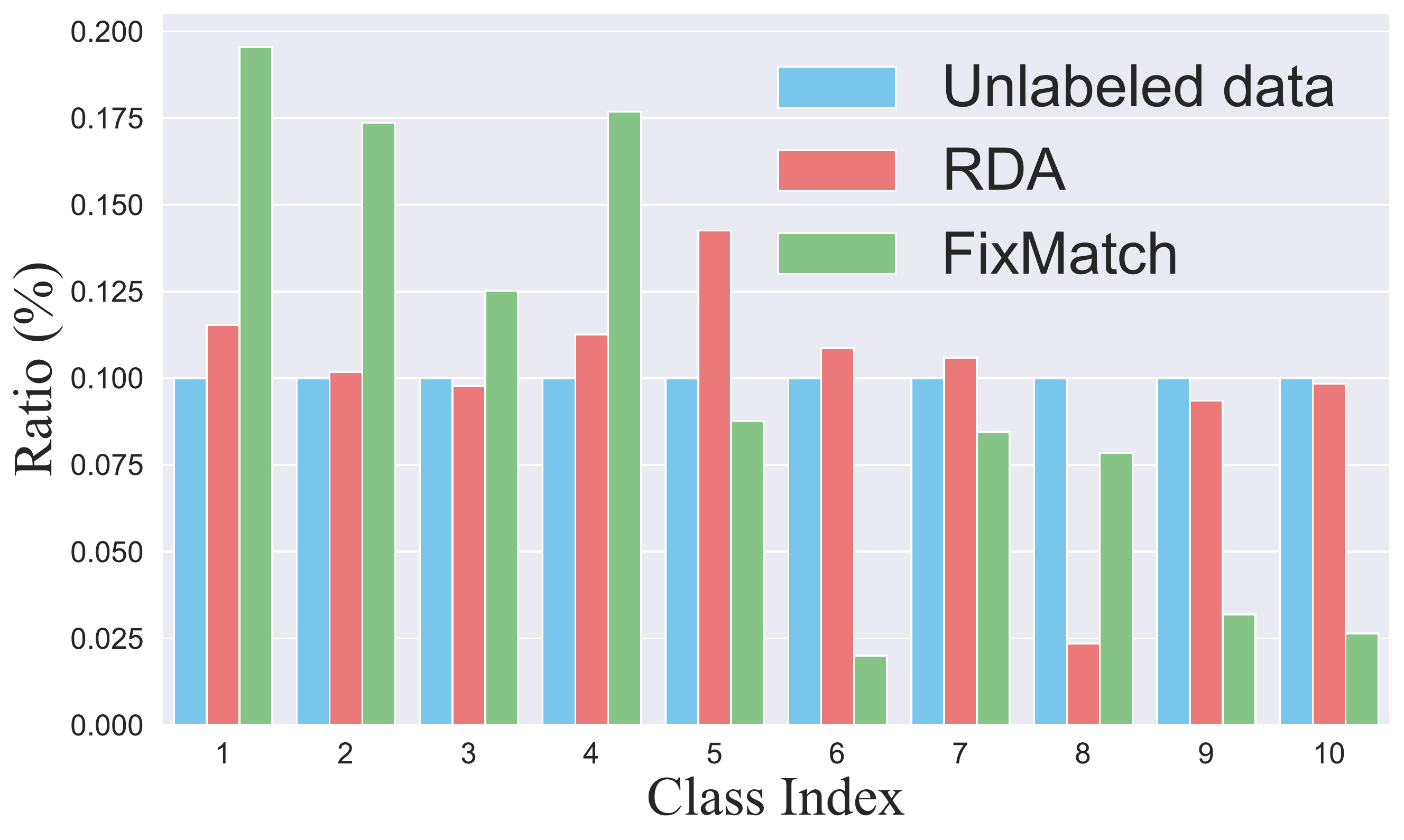}
  \label{fig:resc}
  }
  \hspace{-3mm}
  \subfigure[(100, 80, 1)]{
  \includegraphics[width=\mz]{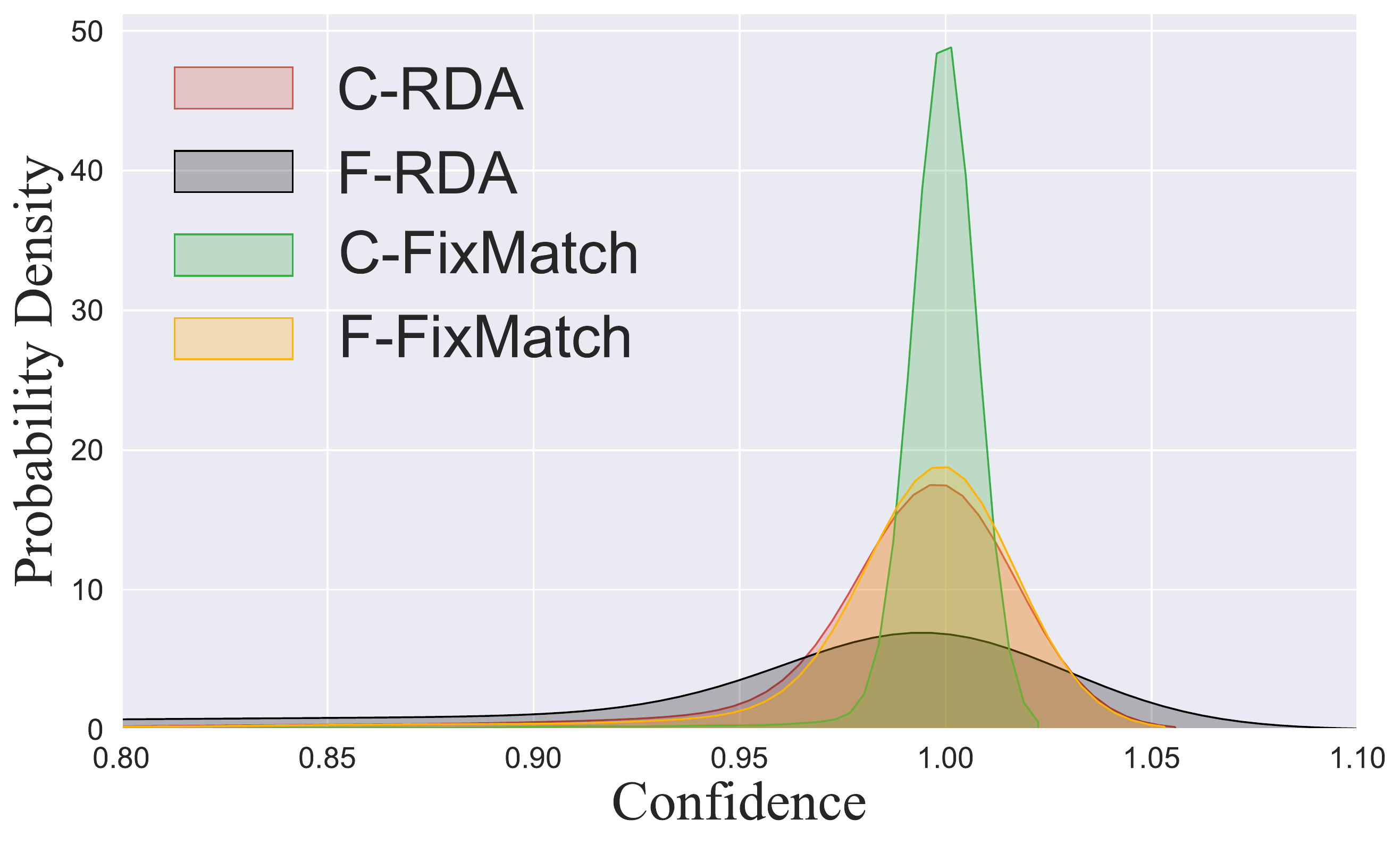}
  \label{fig:resd}
  }
  \subfigure[(40, 10, 5)]{
  \includegraphics[width=\mz]{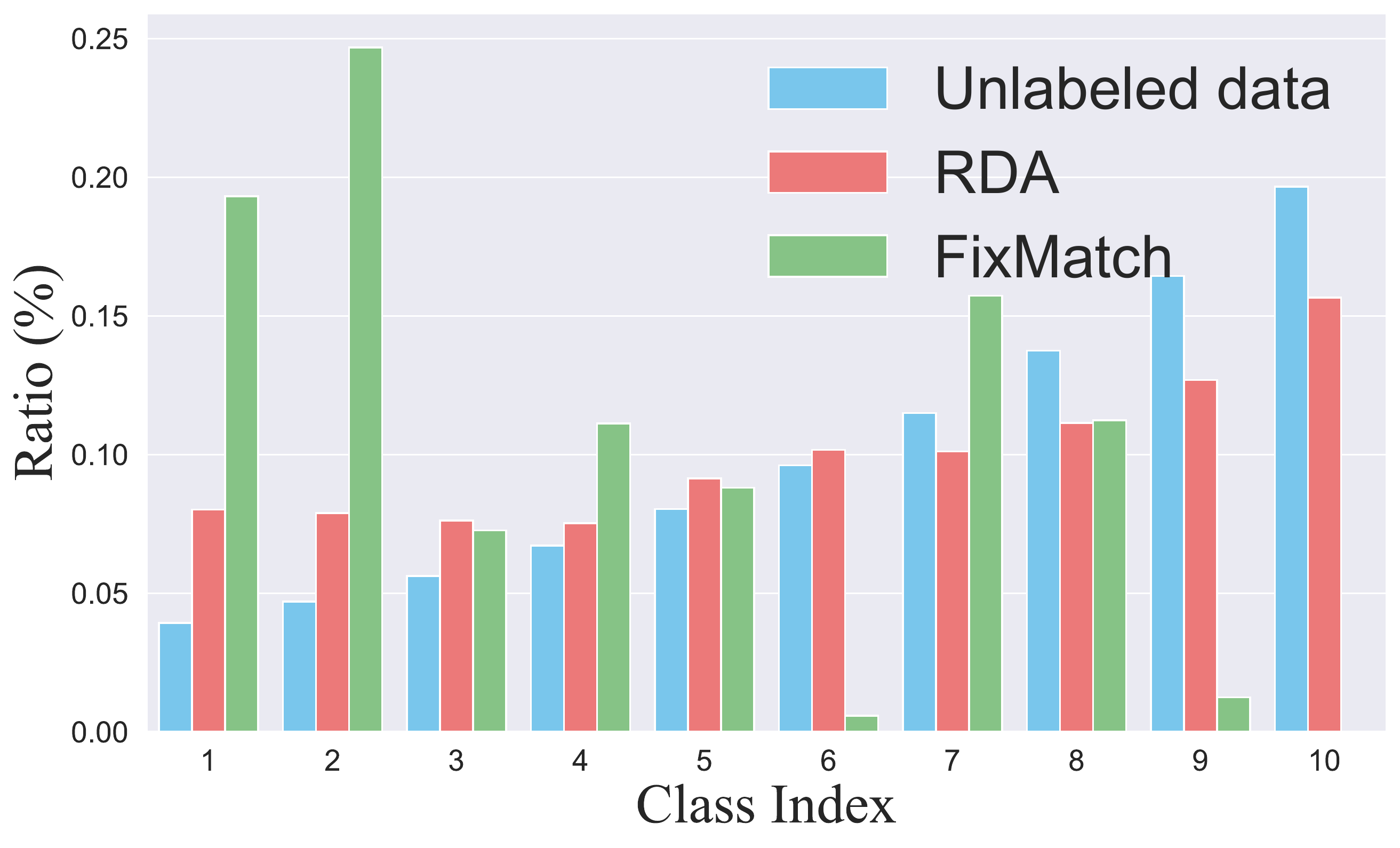}
  \label{fig:rese}
  }
  \hspace{-3mm}
  \subfigure[(40, 10, 5)]{
  \includegraphics[width=\mz]{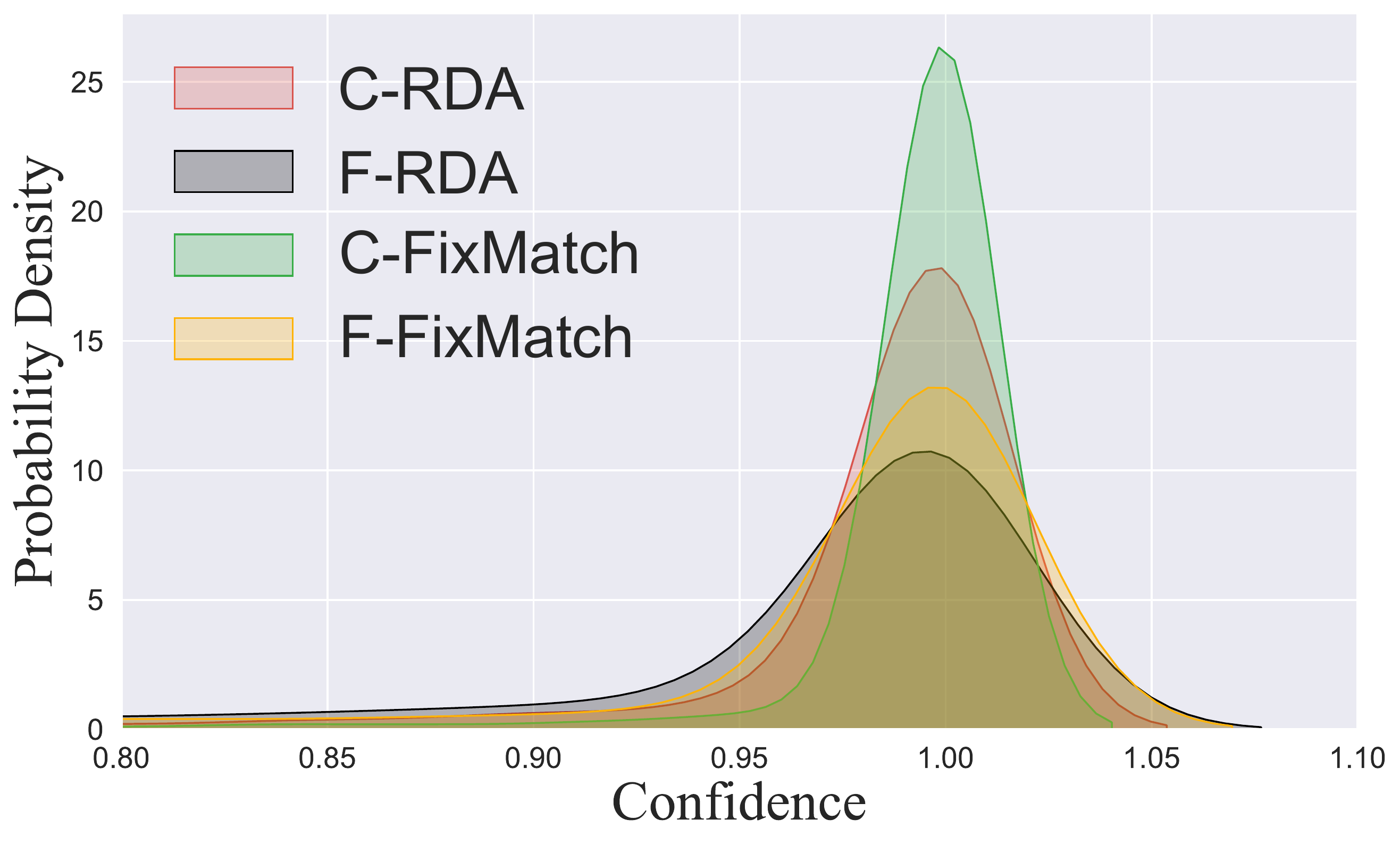}
  \label{fig:resf}
  }
  \hspace{-3mm}
  \subfigure[(100, 40, 10)]{
  \includegraphics[width=\mz]{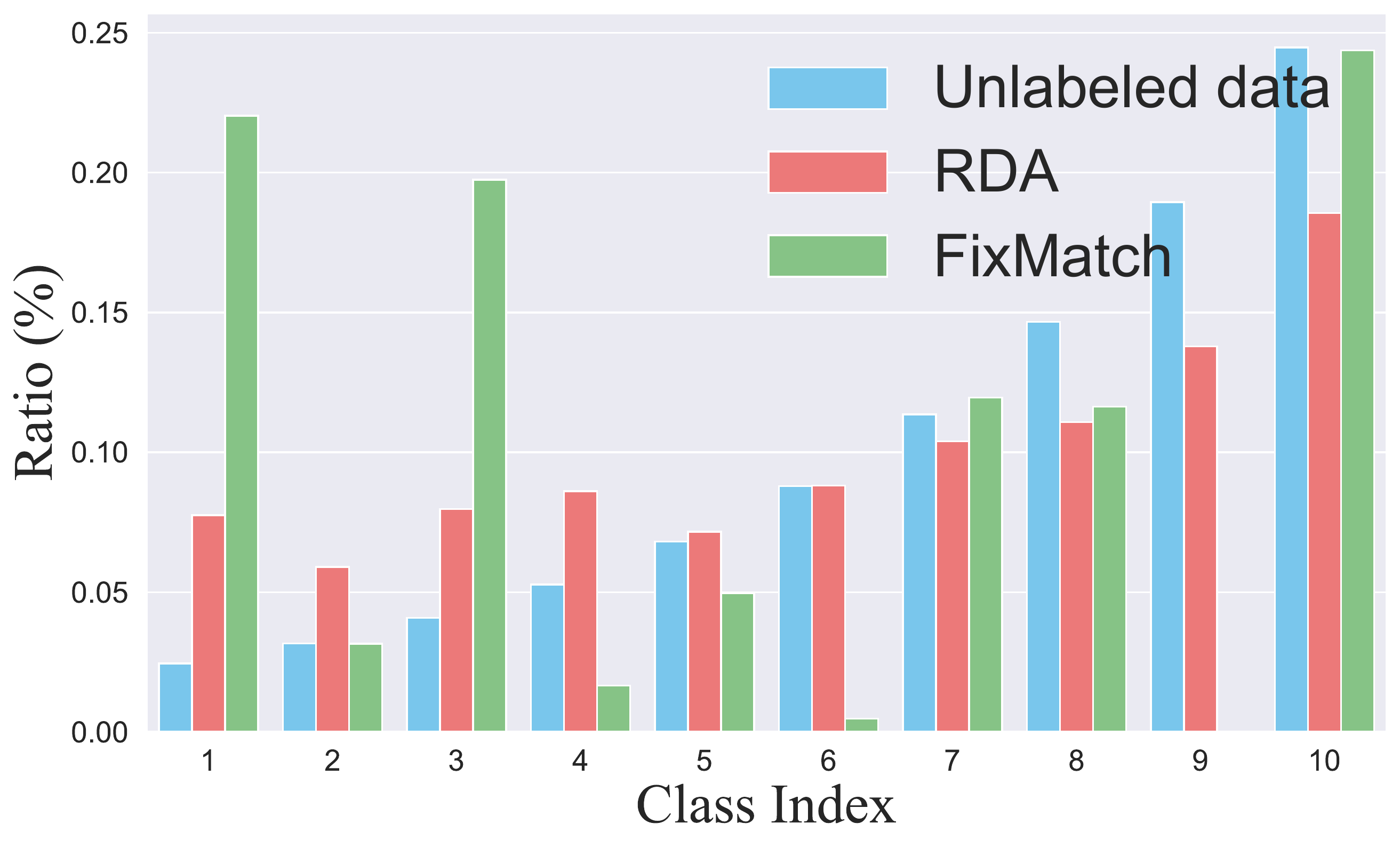}
  \label{fig:resg}
  }
  \hspace{-3mm}
  \subfigure[(100, 40, 10)]{
  \includegraphics[width=\mz]{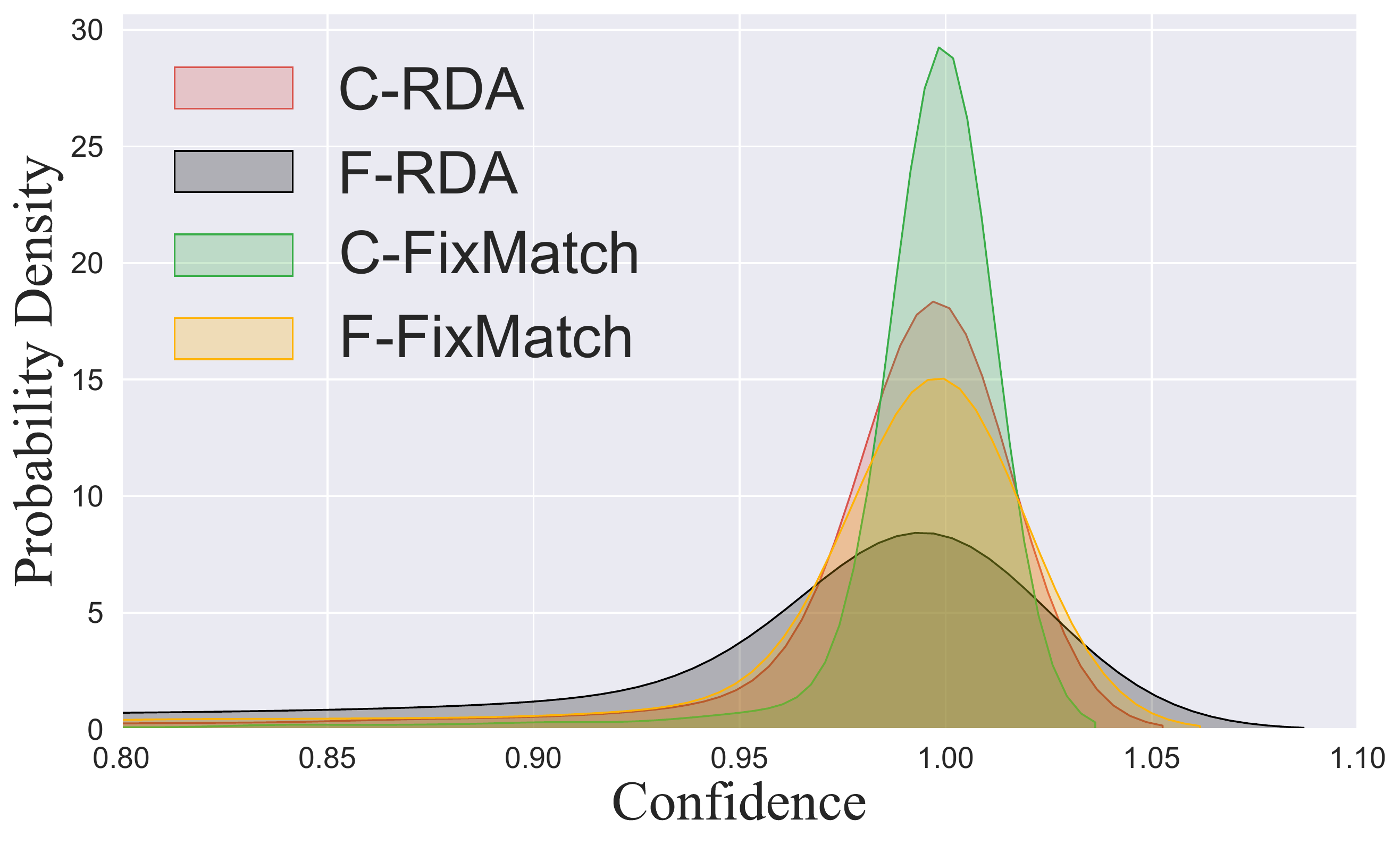}
  \label{fig:resh}
  }
  \caption{In the caption, ($x$,$y$,$z$) denotes (labels, $N_{0}$, $\gamma$). In (a), (b), (c) and (d), $C_{x}$ is imbalanced and $C_{u}$ is balanced. In (e), (f), (g) and (h), $C_{x}$ and $C_{u}$ are imbalanced and they mismatch. In (a), (c), (e) and (g),  the x-axis represents the index of classes in CIFAR-10 and the y-axis represents the ratio of label to the total. \textit{RDA}/\textit{FixMatch} in figures indicates the class predictions from RDA/FixMatch and \textit{Unlabeled data} indicates the ground-truth label of unlabeled samples. In (b), (d), (f) and (h), the x-axis represents the confidence of prediction from RDA/FixMatch and the y-axis represents the probability density of confidence estimated by kernel density estimation (KDE). \textit{C-\texttt{X}} and \textit{F-\texttt{X}} indicate the correct and false class predictions of \textit{\texttt{X}}, respectively.}
  \label{fig:res}
  \vskip 0in
\end{figure*}
\begin{table}[t]
\centering

  \caption{Results of accuracy (\%) with mismatched and imbalanced $C_{x}$, $C_{u}$ (\ie, alter both $N_{0}$ and $\gamma$ at the same time).  Baseline methods are based on our reimplementation. We omit the results of baselines that combine DA considering their poor performance.}

  \vskip 0in
      \label{table:mismatch2}
        \scriptsize
    \setlength{\tabcolsep}{2.35mm}{
      \begin{tabular}{@{}l|cccc|c@{}}     
      \toprule
      \multirow{3}{*}{Method} & \multicolumn{4}{c|}{CIFAR-10}  & \multicolumn{1}{c}{mini-ImageNet} \\  \cmidrule(lr){2-3}  \cmidrule(lr){4-5}   \cmidrule(lr){6-6} 
      & \multicolumn{2}{c}{40 labels, $N_{0}=$10}     & \multicolumn{2}{c|}{100 labels, $N_{0}=$40}   &  1000 labels, $N_{0}=$40  \\  \cmidrule(lr){2-3}  \cmidrule(lr){4-5}    \cmidrule(lr){6-6} 
      & $\gamma=2$    & 5 & 5   & 10   & 10 \\ \cmidrule(r){1-1} \cmidrule(lr){2-3}  \cmidrule(lr){4-5}    \cmidrule(lr){6-6} 
      FixMatch                     & 74.97$\pm$5.80              & 64.62$\pm$6.13                               & 58.72$\pm$3.61                    & 57.49$\pm$4.56                    & 21.40$\pm$0.53 \\

      RDA              & \textbf{88.58$\pm$4.05}            & \textbf{79.90$\pm$2.80}             & \textbf{79.33$\pm$1.37}           & \textbf{70.93$\pm$2.91}          & \textbf{25.99$\pm$0.19}        \\ 
      \bottomrule
      \end{tabular}
      }
\end{table}
\noindent\textbf{Mismatched and Imbalanced $C_{x}$, $C_{u}$}.
Results of the more challenging setting are summarized in Tab.  \ref{table:mismatch2}. While FixMatch and CoMatch fail to correct the
severely biased prediction on unlabeled data caused by reversely ordered labeled data, RDA shows its superior performance in this setting and outperforms baseline methods significantly once again.
As shown in Figs. \ref{fig:rese} and \ref{fig:resg}, while imbalanced and mismatched $C_{x}$, $C_{u}$ lead to strong bias on FixMatch's predictions, RDA shows extraordinary robustness to this scenario. 
In contrast to FixMatch, RDA prevents overfitting of false pseudo-labels, as shown in Figs. \ref{fig:resf} and \ref{fig:resh}.

\begin{wraptable}{r}{0.4\textwidth}
    \scriptsize
    \centering
    \vskip -0.15in
            \caption{Accuarcy (\%) on CIFAR-10 with balanced $C_{x}$ and imbalanced $C_{u}$ (\ie, alter $\gamma$).}
            \label{table:bi}
            \vskip 0.1in
            \setlength{\tabcolsep}{2.65mm}{
            \begin{tabular}{@{}l|c@{}}      
            \toprule
            \multirow{1}{*}{Method} & {40 labels, $\gamma=200$}   \\ \cmidrule(r){1-1}   \cmidrule(lr){2-2} 

            \mbox{FixMatch w. DA}  & {41.37$\pm$1.22}  \\
            \mbox{CoMatch}   & 38.85$\pm$2.19  \\ \cmidrule(r){1-1} \cmidrule(lr){2-2}
            RDA              & \textbf{46.50$\pm$1.07}  \\ 
            \bottomrule
            \end{tabular}
            }
\vskip -1em            
\end{wraptable}

\noindent\textbf{Balanced $C_{x}$ and Imbalanced $C_{u}$}. 
As shown in Tab. \ref{table:bi}, 
RDA shows the compatibility to this scenario and also outperforms baselines combining distribution alignment. Mismatched distributions caused by balanced $C_{x}$ and imbalanced $C_{u}$ also lead to poor performance of methods with original distribution alignment. 

\noindent\textbf{Other Mismatched Settings.}
We also show results of RDA within the DARP's protocol averaged on all five runs. 
As shown in Tab. \ref{table:darp}, RDA consistently outperforms current class-imbalanced SSL method DARP \cite{kim2020distribution} and shows the largest gains in all settings with mismatched $C_{x}$ and $C_{u}$. 
More discussions on generalized settings of mismatched distributions can be found in Sec. \ref{app:add} of Supplementary Material.

\begin{table}[t]
\centering
  \scriptsize
  \caption{Accuracy (\%) under DARP’s protocol (see Sec. \ref{app:darp} of Supplementary Material for more details and baselines).  
 WRN-28-2 is adopted as the backbone for all datasets.
  }

  \vskip 0in
      \label{table:darp}
      \setlength{\tabcolsep}{1.3mm}{
      \begin{tabular}{@{}l|cccc|cc@{}}     
      \toprule
      \multirow{3}{*}{Method} & \multicolumn{4}{c|}{CIFAR-10 ($\gamma_{l}=$100)} &\multicolumn{2}{c}{STL-10 ($\gamma_{l}\neq \gamma_{u}$)} \\  \cmidrule(lr){2-5}  \cmidrule(lr){6-7}
      & $\gamma_{u}=1$    & 50 & 150  & 100 (reversed)  &  $\gamma_{l}=10$ & 20 \\ \cmidrule(r){1-1} \cmidrule(lr){2-5}     \cmidrule(lr){6-7}
      FixMatch                     & 68.90$\pm$1.95              & 73.90$\pm$0.25                               & 69.60$\pm$0.60                     & 65.50$\pm$0.05      & $\mbox{72.90$\pm$0.09}$ & $\mbox{63.40$\pm$0.21}$           \\
  
     DARP              & 85.40$\pm$0.55       & 77.30$\pm$0.17   &72.90$\pm$0.24  & 74.90$\pm$0.51   &77.80$\pm$0.33  & 69.90$\pm$0.40         \\ \cmidrule(r){1-1} \cmidrule(lr){2-5}      \cmidrule(lr){6-7}
      RDA              & \textbf{93.35$\pm$0.24}            & \textbf{79.77$\pm$0.06}             & \textbf{74.48$\pm$0.24}           & \textbf{79.25$\pm$0.52}     & \textbf{87.21$\pm$0.44}             &\textbf{83.21$\pm$0.52}       \\ 
      \bottomrule
      \end{tabular}
      }
  \vskip 0in
\end{table}

\subsection{Ablation Study}
\label{sec:ab}
To prove the effectiveness of each component in RDA, we conduct ablation studies on CIFAR-10 using consistent experimental setup with Sec. \ref{sec:id}. We mainly conduct experiments in three settings described in Sec. \ref{sec:cxcu}
and change the strategy performing distribution alignment from each direction as follows:

\noindent$\bm{\mathbbm{E}_{u}(p)\Rightarrow\mathbbm{E}_{u}(\overline{q})}$. 
We keep Eq. \eqref{eq:da1} and discard Eq. \eqref{eq:da2}. \ie, we align distribution of class predictions to ``reversed'' distribution of complementary predictions.

\noindent$\bm{\mathbbm{E}_{u}(q)\Rightarrow\mathbbm{E}_{u}(\overline{p})}$. 
We keep Eq. \eqref{eq:da2} and discard Eq. \eqref{eq:da1}. \ie, we align distribution of complementary predictions to ``reversed'' distribution of class predictions.

As shown in Tab. \ref{table:ab}, the performance of default RDA in mismatched distributions is dominant. RDA helps the model better maximize the objective Eq. \eqref{eq:obj} while obtaining helpful guidance information of class distribution without prior.

\begin{table}[t]
    \centering
  \caption{Accuracy (\%) of ablation studies on CIFAR-10 with two alternative alignment strategies. ``/'' represents the conventional setting and $\gamma=1$ represents balanced $C_{u}$. }
      \label{table:ab}
      \vskip 0in
      \scriptsize
      \setlength{\tabcolsep}{0.88mm}{
    \begin{tabular}{@{}l|cccccc@{}}     
      \toprule
      \multirow{3}{*}{Method} & \multicolumn{3}{c}{40 labels}   &\multicolumn{3}{c} {100 labels}   \\  \cmidrule(lr){2-4}  \cmidrule(lr){5-7}   
      & $N_{0},\gamma=\textrm{/}$ & 20,1 & 10, 5  &/\ &  80,1 & 40, 10    \\ \cmidrule(r){1-1} \cmidrule(lr){2-4}  \cmidrule(lr){5-7} 
      $\mathbbm{E}_{u}(p)\Rightarrow\mathbbm{E}_{u}(\overline{q})$&91.88$\pm$1.46       & 73.54$\pm$3.44   & 74.83$\pm$2.99 &94.14$\pm$0.52  & 54.88$\pm$11.79   & 62.96$\pm$3.43  \\
      $\mathbbm{E}_{u}(q)\Rightarrow\mathbbm{E}_{u}(\overline{p})$  &93.35$\pm$0.12  & 58.90$\pm$3.50 & 57.38$\pm$3.63 
      & \textbf{94.60$\pm$0.08} & 54.26$\pm$4.34   & 55.39$\pm$14.14   \\ \cmidrule(r){1-1} \cmidrule(lr){2-4}  \cmidrule(lr){5-7}   
      RDA  &\textbf{94.13$\pm$1.22} & \textbf{81.78$\pm$6.44}     & \textbf{79.90$\pm$2.88}  & {94.35$\pm$0.25}  & \textbf{79.00$\pm$2.67}  & \textbf{70.93$\pm$2.91} \\ 
      \bottomrule
      \end{tabular}
      }
    \vskip 0in
\end{table}
\section{Conclusion}
In this work, we propose a semi-supervised learning approach which is robust to both the conventional SSL and SSL in mismatched distributions. First, we describe a scenario that has not been discussed extensively by recently-proposed SSL work: mismatched distributions. Second, we improve distribution alignment by proposed RDA so that this technique could be applied into mismatched scenario safely. Then we show RDA results in a form of maximizing the input-out mutual information without any prior information. Finally, we demonstrate that our method outperforms existing baselines significantly under various scenarios.

\noindent \textbf{Acknowledgements.} This work is supported by projects from NSFC Major Program (62192783), CAAI-Huawei MindSpore (CAAIXSJLJJ-2021-042A), China Postdoctoral Science Foundation (2021M690609), Jiangsu NSF (BK20210224), and CCF-Lenovo Bule Ocean. Thanks to Prof. Penghui Yao's helpful discussions. 

\clearpage
%
%
\bibliographystyle{splncs04}
\bibliography{egbib}

\clearpage
\appendix


\section*{Supplementary Material}

\section{Algorithm}
\label{sec:alg}

Pseudo-code of RDA is shown in Algorithm \ref{a}.
\vspace{ -1em}
\begin{algorithm}[h]
      \caption{RDA: Reciprocal Distribution Alignment} 
      \LinesNumbered 
      \label{a}
      \KwIn{batch of labeled data $\mathcal{X} =\{(x_{b},y_{b})\}^{B}_{b=1}$, batch of unlabeled data $\mathcal{U} =\{u_{b}\}^{\mu B}_{b=1}$,  Default Classifier $\mathcal{D} $, Auxiliary Classifier $\mathcal{A}$, maximum number of iterations $M$, augmentation $\alpha$}
      \For{iteration $t=1$ to $M$}{
      
      $\overline{y}_{b}=\textrm{randselect}(\mathcal{Y} \setminus \{y_{b}\}),b\in (1,\dots,B)$\\
 
      \hfill\textcolor{light-gray}{ \tcp{Select complementary label from $\mathcal{Y} $ randomly}}
    
  $\mathcal{L}_{sd}=\frac{1}{B}\sum_{n = 1}^{B}H(y_{n},P_{\mathcal{D}}(y_{c}|x_{w,n}))$\\
  \hfill\textcolor{light-gray}{ \tcp{Compute default supervised loss}}
  
  $\mathcal{L}_{sa}=\frac{1}{B}\sum_{n = 1}^{B}H(\overline{y}_{n},P_{\mathcal{A}}(y_{c}|x_{w,n}))$\\
  \hfill\textcolor{light-gray}{ \tcp{Compute auxiliary supervised loss }}
  \For{iteration $b=1$ to $\mu B$}{
            $u_{w,b}=\alpha_{\textrm{weak}}(u_b)$\textcolor{light-gray}{\hfill \tcp{Apply  weak augmentation to $u_b$}}
            $u_{s,b}=\alpha_{\textrm{strong}}(u_b)$\textcolor{light-gray}{\hfill \tcp{Apply  strong augmentation to $u_b$}}
            $p_{b}=P_{\mathcal{D}}(y_{c}|u_{w,b})$\textcolor{light-gray}{\hfill \tcp{Compute predictions of $\mathcal{D}$ for $u_{w,b}$  }}
            $p_{s,b}=P_{\mathcal{D}}(y_{c}|u_{s,b})$\textcolor{light-gray}{\hfill \tcp{Compute predictions of $\mathcal{D}$ for $u_{s,b}$ }}
            $q_{b}=P_{\mathcal{A}}(y_{c}|u_{w,b})$\textcolor{light-gray}{\hfill \tcp{Compute predictions of $\mathcal{A}$ for $u_{w,b}$  }}
            $q_{s,b}=P_{\mathcal{A}}(y_{c}|u_{s,b})$ \textcolor{light-gray}{\hfill \tcp{Compute predictions of $\mathcal{A}$ for $u_{s,b}$ }}
            $\overline{p}_{b}=\textrm{Norm}(\mathbbm{1}-p_{b})$\\
            $\overline{q}_{b}=\textrm{Norm}(\mathbbm{1}-q_{b})$\\
            
            $\Tilde{p}_{b}=\textrm{Norm}(p_{b}\times\frac{\Psi(\overline{q})}{\Psi(p)})$\\
            \textcolor{light-gray}{ \tcp{Apply distribution alignment reciprocally}}
            $\Tilde{q}_{b}=\textrm{Norm}(q_{b}\times\frac{\Psi(\overline{p})}{\Psi(q)})$\textcolor{light-gray}{\hfill \tcp{ Soft complementary labels for $u_{w,b}$}}
            $\hat{\Tilde{p}}_{b}= \arg \max (\Tilde{p}_{b})$\textcolor{light-gray}{\hfill \tcp{ Hard pseudo-labels for $u_{w,b}$}}

  }
   
  $\mathcal{L}_{cd}=\frac{1}{\mu B}\sum_{n = 1}^{\mu B} H(\hat{\Tilde{p}}_{n},p_{s,n})$\hfill\textcolor{light-gray}{ \tcp{Compute default consistency loss }}
          $\mathcal{L}_{ca}=\frac{1}{\mu B}\sum_{n = 1}^{\mu B}H(\Tilde{q}_{n},q_{s,n})$\textcolor{light-gray}{\hfill \tcp{Compute auxiliary consistency loss}}
         \textbf{return} $\mathcal{L}=\mathcal{L}_{sd}+\lambda _{a}\mathcal{L}_{sa}+\lambda _{cd}\mathcal{L}_{cd}+\lambda _{ca}\mathcal{L}_{ca} $ \textcolor{light-gray}{\hfill \tcp{Optimize total loss $\mathcal{L}$}}
      }
\end{algorithm}
\section{Datasets with Mismatched distributions}
\subsection{Protocol of DARP}
\label{app:darp}
DARP \cite{kim2020distribution} introduces this protocol to build a class-imbalanced dataset. DARP introduces two parameters namely imbalanced ratio $\gamma_{l}$ and $\gamma_{u}$ to  control the class-imbalance of dataset. For the labeled data, the data number of each class $N_{i}$ is scaled by:  $N_{i}=N_{1}\times \gamma_{l}^{-\frac{i-1}{n-1}}$, where $i\in(1,\dots,n)$ and $n$ is the number of classes. Likewise, for the unlabeled data, the data number of each class $M_{i}$ is scaled by:  $M_{i}=M_{1}\times \gamma_{u}^{-\frac{i-1}{n-1}}$. Specially, \textit{``reversed''} in Tab. \ref{table:darp} indicates that the unlabeled data with reversely ordered class distribution is used, \ie, $M_{i}=M_{1}\times \gamma_{u}^{-\frac{n-i}{n-1}}$. $N_{1}=1500$ and $M_{1}=3000$ are applied into CIFAR-10 under DARP's protocol. DARP constructs STL-10 with $N_{1}=450$ and fully use the given unlabeled data in this dataset (\ie, $\sum_{i=1}^{n} M_{i}=100,000$). $\gamma_{u}$ is not set for STL-10 due to the unknown ground-truth of the unlabeled data. DARP claims the labeled and unlabeled data in STL-10 have different distributions, \ie, $\gamma_{l}\neq\gamma_{u}$.

Additionally, we show the results of more baseline methods under DARP's protocol \cite{kim2020distribution} in Tab. \ref{table:darp2} for comparison with our method.
\begin{table}[t]
\centering
  \scriptsize
  \caption{Results of accuracy (\%) under DARP’s protocol. We report more baseline results including MixMatch \cite{berthelot2020remixmatch} and ReMixMatch \cite{berthelot2020remixmatch} for comparison with RDA.  Results of baseline methods  are copied from DAPR~\cite{kim2020distribution}. We abbreviate \textbf{R}eMixMatch and \textbf{M}ixMatch as \textbf{R} and \textbf{M}, respectively. }

  \vskip 0in
      \label{table:darp2}
      \setlength{\tabcolsep}{0.4mm}{
      \begin{tabular}{@{}l|cccc|cc@{}}     
      \toprule
      \multirow{3}{*}{Method} & \multicolumn{4}{c|}{CIFAR-10 ($\gamma_{l}=$100)} &\multicolumn{2}{c}{STL-10 ($\gamma_{l}\neq \gamma_{u}$)} \\  \cmidrule(lr){2-5}  \cmidrule(lr){6-7}
      & $\gamma_{u}=1$    & $\gamma_{u}=50$ & $\gamma_{u}=150$  & $\gamma_{u}=100$ (reversed)  &  $\gamma_{l}=10$ & $\gamma_{l}=20$ \\ \cmidrule(r){1-1} \cmidrule(lr){2-5}     \cmidrule(lr){6-7}
      \textbf{M}ixMatch &                   41.50$\pm$0.76& 64.10$\pm$0.58 & 65.50$\pm$0.64& 47.90$\pm$0.09    &56.30$\pm$0.46 &45.20$\pm$0.19     \\
 
     \textbf{M} w. DARP              & 86.70$\pm$0.80 & 68.30$\pm$0.47 & 66.70$\pm$0.25 & 72.90$\pm$0.24 &67.90$\pm$0.24 &58.30$\pm$0.73        \\ 
      \textbf{R}{eMixMatch}                       & 48.30$\pm$0.14 & 75.10$\pm$0.43 & 72.50$\pm$0.10 & 49.00$\pm$0.55 &67.80$\pm$0.45 &60.10$\pm$1.18            \\
 
     \textbf{R} w. DARP              & 89.70$\pm$0.15 & 77.40$\pm$0.22 & 73.20$\pm$0.11 & \textbf{80.10$\pm$0.11} & 79.40$\pm$0.07 & 70.90$\pm$0.44      \\ \cmidrule(r){1-1} \cmidrule(lr){2-5}      \cmidrule(lr){6-7}
      RDA              & \textbf{93.35$\pm$0.24}            & \textbf{79.77$\pm$0.06}             & \textbf{74.48$\pm$0.24}           & {79.25$\pm$0.52}     & \textbf{87.21$\pm$0.44}             &\textbf{83.21$\pm$0.52}       \\ 
      \bottomrule
      \end{tabular}
      }
  \vskip 0in
\end{table}

\subsection{Imbalanced  $C_{x}$}
\label{app:mcd}
We now show the details on how to construct dataset with imbalanced labeled data (\ie, $C_x$ is imbalanced) while keeping the number of labeled data unchanged. Following CIFAR-LT~\cite{cao2019learning}, we mimic the imbalanced $C_x$  by an exponential function: $N_i = N_0 \times\gamma_x^{-\frac{i-1}{n-1}}, i\in(1,\dots n)$ to generate the number of labeled data for class with index $i$, where $n$ is the number of classes. We use different $N_0$ to investigate different scale of imbalance. With $N_{0}$ we set, $\gamma_x$ is calculated by the constraint $\sum_{i=1}^n N_i = D_x$, where $D_x$ is the number of labels we set. We search for a $\gamma_x$ from small to large in natural numbers, so that the progress of search  can be summarized as the following optimization:
\begin{equation}
\begin{aligned} \label{P}
&\hat{\gamma_x}=\mathop{\arg\min}_{\gamma_x} \quad D_x-\sum_{i=1}^n N_i\\
&\begin{array}{r@{\quad}r@{}l@{\quad}l}
s.t. &D_x-\sum_{i=1}^n N_i >0 \\
\end{array}
\end{aligned}
\end{equation}

With obtained $\gamma_x$, we add missing labels for classes other than the first class (\ie, keep the $N_0$ unchanged) in turn until the condition $\sum_{i=1}^n N_i = D_x$ is met. Here we found that the labels that need to be added are less than $n$, which means we can complete this progress by adding  at most one round in turn.

\begin{table}[t]
  \scriptsize
      \centering
      \caption{Accuracy (\%) in open-set SSL. Both Semi-Aves and Sem-Fungi have not only OOD unlabeled data but also in-distribution unlabeled data within class distribution that mismatches with the labeled data \cite{su2021realistic}. 
Unlike native RDA, we set a confidence-based thresholding to serve as a simple filter for OOD samples. While this goes against our original intention of using only distribution alignment to improve pseudo-labeling, it is a compromise for this open-set scenario. We follow the backbone and hyper-parameters for FixMatch (except for threshold $\tau=0.5$) in \cite{su2021realistic} and train models from scratch. 
  }
      \begin{tabular}{@{}l|cc@{}}     
      \toprule
      \multirow{2}{*}{Method} & Semi-Aves &Semi-Fungi  \\  \cmidrule(lr){2-2}   \cmidrule(lr){3-3} 
          & \multicolumn{1}{c}{Top-1 / Top-5}   & \multicolumn{1}{c}{Top-1 / Top-5} \\ \cmidrule(r){1-1} \cmidrule(lr){2-2}   \cmidrule(lr){3-3} 
      FixMatch     & 19.2 / 42.6     & 25.2 / 50.2    \\
      RDA    & \textbf{21.9} / \textbf{43.7}     & \textbf{28.7} / \textbf{51.2}        \\ 
      \bottomrule
      \end{tabular}

  \label{table:ood}
\end{table}

\begin{table}[t]
    \scriptsize
    \centering
            \caption{Results of accuracy (\%) on CIFAR-10 using full labels with 40\% asymmetric noise.  Results of baseline noisy label learning methods are reported in DivideMix~\cite{li2020dividemix}. }
            \label{table:no}
            \vskip 0.1in
            \begin{tabular}{@{}l|c@{}}      
            \toprule
            \multirow{2}{*}{Method} & {CIFAR-10}    \\ \cmidrule(lr){2-2} 
                                    & {40\% asym noise}   \\ \cmidrule(r){1-1} \cmidrule(lr){2-2} 
            \mbox{P-correction \cite{yi2019probabilistic}}  & {88.5}  \\
            \mbox{Joint-Optim} \cite{tanaka2018joint}  & {88.9} \\
            \mbox{Meta-Learning} \cite{li2019learning} & {89.2}  \\
            \mbox{DivideMix \cite{li2020dividemix}}   & \textbf{93.4}  \\ \cmidrule(r){1-1} \cmidrule(lr){2-2}
            RDA              & {90.5}  \\ 
            \bottomrule
            \end{tabular}
\end{table}

\section{Additional Experiments with Mismatched Distributions}
\label{app:add}
\subsection{Mismatched Distributions with Non-overlapping Classes in the Unlabeled Data}
In addition to the mismatched distributions discussed in Sec. \ref{sec:md}, SSL with non-overlapping classes in the unlabeled data is a more generalized mismatched scenario. As mentioned, this distribution mismatch is known as SSL using \textit{out-of-distribution}  \textbf{(OOD)} samples in the unlabeled data \cite{oliver2018realistic} (also known as \textit{open-set SSL}). To explore the robustness of RDA, we experiment under the same setting as Sec. \ref{sec:id} in \cite{oliver2018realistic} and observe  slight accuracy drops of RDA, except for at 100\%  class mismatch extent (sometimes more than 10\% drop).
This is understandable because SSL with OOD samples is very different from our task addressing the mismatched distributions with the same classes and we learn total unlabeled data without OOD sample filters.
Considering the fine-grained datasets \textit{Semi-Aves} \cite{su2021semi} (200/800 in-distribution/OOD classes) and \textit{Semi-Fungi} \cite{su2021realistic} (200/1194 in-distribution/OOD classes) are also used to mimic the OOD setting  \cite{su2021realistic}, we evaluate our RDA  on them. The class distributions of both datasets are long-tailed and mismatched. 
As shown in Tab. \ref{table:ood}, when suffers from both mismatched distributions (in our paper) and OOD samples, RDA can still outperform our main baseline FixMatch by improving the pseudo-labels with in-distribution classes, although some aligned pseudo-labels may be assigned to OOD samples. In the future, we will extend RDA to handle open-set SSL, \eg, detecting OOD samples from the perspective of distribution.
Furthermore, we provide discussions on the mismatched distributions with completely disjoint classes in $C_{x}$ and $C_{u}$. This scenario is an extreme case to SSL with OOD samples and \textit{few-label transfer} proposed in \cite{li2021improve} is closely related to it. Differently, our paper argues that even in the normal SSL setting where $C_x$ and $C_u$ share the same classes, the mismatched distributions could cause significant degradation of many popular SSL methods. Considering RDA is originally designed  to strategically align distributions of overlapping classes, it could not work with completely disjoint $C_x$ and $C_u$.

\subsection{Learning with Symmetric Noisy Labels}
\label{sec:nll}
This is a novel setting different from the previous mismatched setting. We note that there are some subtle connections between dataset with noise and mismatched distributions dataset. We treat the total data in the dataset with noise as labeled data and also treat them as unlabeled data, \ie, this scenario can be seen as a
process of SSL. Asymmetric noise is designed by mapping ground-truth labels to similar classes. \eg, in CIFAR-10, we generate noisy labels by deer$\rightarrow$ horse, dog$\leftrightarrow$ cat, \etc~Thus, we can regard CIFAR-10 with asymmetric noise as a mismatched dataset, \ie, the existence of asymmetric noise increases the ratio of some classes and decreases the ratio of some classes accordingly. 

We evaluate RDA on CIFAR-10 with 40\% asymmetric noise. Following DivideMix~\cite{li2020dividemix}, the backbone used in experiments is 18-layer PreAct ResNet \cite{he2016identity} and we train the models with the same setting in Sec. \ref{sec:id}. 
Although we do not make a special design for noisy label, RDA  still achieves quite competitive performance compared with the  noisy label learning methods shown in Tab. \ref{table:no}.


\end{document}